\def\BibTeX{{\rm B\kern-.05em{\sc i\kern-.025em b}\kern-.08em
    T\kern-.1667em\lower.7ex\hbox{E}\kern-.125emX}}    
\newcommand{\method}{\textsc{TensorCodec}\xspace}
\newcommand{\smallsection}[1]{\noindent\underline{\smash{\textbf{#1:}}}}
\newcommand{\nttd}{NTTD\xspace}
\newcommand{\nttD}{NTTD\xspace}
\newcommand{\ttc}{TT cores\xspace}
\newcommand{\ttd}{TTD\xspace}
\newcommand{\order}{\bm{\pi}}
\newcommand{\model}{\theta}
\newcommand{\vect}[1]{\mathbf{#1}}
\newcommand{\mat}[1]{\mathbf{#1}}
\newcommand{\tensor}[1]{\mathbf{\mathcal{#1}}}
\newcommand{\fold}[1]{#1^{\texttt{folded}}}
\newcommand{\reorder}[1]{{#1}_{\order}}
\newcommand{\fnorm}[1]{\lVert #1 \rVert_{F}}
\newcommand{\shape}{N_1 \times \cdots \times N_d}
\newcommand{\slice}[2]{{\tensor{X}}^{(#1)}(#2)}
\newcommand{\cslice}[3]{#1^{(#2)}(#3)}
\newcommand{\recon}[1]{\Tilde{#1}}
\newtheorem{problem}{Problem}
\newtheorem{theorem}{Theorem}
\newcommand{\floor}[1]{\left\lfloor #1 \right\rfloor}
\newcommand{\romanum}[1]{\MakeUppercase{\romannumeral #1}}
\definecolor{my_blue}{RGB}{47, 85, 151}
\definecolor{my_sky}{RGB}{0, 176, 240}
\definecolor{my_green}{RGB}{84, 130, 53}
\definecolor{darkgreen}{RGB}{0, 150, 0}
\definecolor{my_yellow}{RGB}{191, 144, 0}
\definecolor{my_violet}{RGB}{112, 48, 160}
\newcommand{\jihoon}[1]{\textcolor{darkgreen}{#1}}
\newcommand{\red}[1]{\textcolor{red}{#1}}
\begin{document}

\title{TensorCodec: Compact Lossy Compression of Tensors without Strong Data Assumptions \vspace{-2mm}}
%\thanks{Identify applicable funding agency here. If none, delete this.}

\author{\IEEEauthorblockN{Taehyung Kwon\textsuperscript{1}, Jihoon Ko\textsuperscript{1}, Jinhong Jung\textsuperscript{2}, and Kijung Shin\textsuperscript{1}}
\IEEEauthorblockA{\textsuperscript{1}Kim Jaechul Graduate School of AI, KAIST, \textsuperscript{2}School of Software, Soongsil University}
taehyung.kwon@kaist.ac.kr, jihoonko@kaist.ac.kr, jinhong@ssu.ac.kr, kijungs@kaist.ac.kr
\vspace{-4.8mm}}
\maketitle

\begin{abstract}
% Taehyung
% A lot of real-world data can be represented as a dense tensor.
% They need to be compressed due to their large space costs which are proportional to the products of dimensions.
%Recently, Tensor-train (TT) decomposition is widely used to compress tensors along with traditional baselines such as CANDECOMP/PARAFAC (CP) decomposition and Tucker decomposition.
%However, they still approximate entries of tensors with linear operations of vectors or matrices which have limited expressiveness.
Many real-world datasets are represented as tensors, i.e., multi-dimensional arrays
of numerical values.
Storing them without compression often requires substantial space, which grows exponentially with the order.
While many tensor compression algorithms are available, many of them rely on strong data assumptions regarding its order, sparsity, rank, and smoothness.
%Although tensor decomposition techniques are more versatile and applicable to a wide range of tensors, their compression efficacy is constrained by their limited expressiveness, which is restricted to linear operations.

%Many algorithms have been studied to compress real-world tensors. 
%Methods that require an assumption on input data, such as order, continuity, or sparsity, are not suitable for compressing tensors without assumptions.
%Tensor decomposition methods can be applied to general tensors.
%They only use linear operations to approximate the entries, which leads to inaccuracy or a large number of parameters due to their limited expressive power.

In this work, we propose \method, a lossy compression algorithm for general tensors that do not necessarily adhere to strong input data assumptions.
\method incorporates three key ideas. The first idea is Neural Tensor-Train Decomposition (\nttd)  where we integrate a recurrent neural network into 
Tensor-Train Decomposition to enhance its expressive power and alleviate the limitations imposed by the low-rank assumption.
Another idea is to fold the input tensor into a higher-order tensor to reduce the space required by \nttd. 
%Finally, the mode indices of the folded tensor are reordered to reveal patterns that can be effectively leveraged by \ntt decomposition for improved approximation.
Finally, the mode indices of the input tensor are reordered to reveal patterns that can be exploited by \nttd for improved approximation.
%\red{[KJ: I am here]}
% We devise a TT-tensor format that folds an input tensor to a higher-order tensor to reduce the complexity of the model.
% The orders of indices of the original tensor are important for reducing approximation errors of \ttd applied to the folded tensor.
% Thus, we suggest new algorithms which are proper for general dense tensors for reordering them.
% Finally, we improved the expressiveness of the Tensor-train (TT) decomposition by generalizing it with a recurrent neural network.
%Specifically, we improved the expressiveness of the Tensor-train (TT) decomposition by gene configuring the TT cores as the outputs of LSTM, and the compactness by folding the inputs into a higher-order tensor with smaller dimensions.
%In addition, we suggest new algorithms for reordering indices suitable for inductive bias of the generalized decomposition procedure.
%We extend a process of folding a matrix into a tensor in TT matrix format to tensors to reduce the number of parameters of TT decomposition.
%Since the orders of indices of the original tensor are important for reducing approximation errors of \ttd applied to the folded tensor, we suggest new algorithms for reordering.
%In addition, we add non-linear operations to the model by configuring the TT cores of \ttd as the outputs of a neural network and this reduces the approximation errors.

Our analysis and experiments on $8$ real-world datasets demonstrate that \method is
 \textbf{(a) Concise:} it gives up to $7.38\times$ more compact compression than the best competitor with similar reconstruction error,
 \textbf{(b) Accurate:} given the same budget for compressed size, it yields up to $3.33\times$ more accurate reconstruction than the best competitor,
 \textbf{(c) Scalable:} its empirical compression time is linear in the number of tensor entries, and it reconstructs each entry in logarithmic time.
Our code and datasets are available at \url{https://github.com/kbrother/TensorCodec}.

\end{abstract}

\begin{IEEEkeywords}
Tensor, Decomposition, Data Compression
\end{IEEEkeywords}

\section{Introduction} \label{sec:intro}

% A tensor is a multi-dimensional array~\cite{kolda2009tensor} and is the high-order version of a matrix. 
% We consider a tensor to be dense if the majority of the entries in it are non-zero entries.
% Conversely, a sparse tensor is a tensor of which most of the entries are zero.
% Many real-world data such as sensory data \cite{jang2020d,schimbinschi2015traffic}, stock history~\cite{jang2021fast}, and extracted features from motion videos~\cite{wang2012mining, karim2019multivariate} are represented as dense tensors.

A tensor is a multi-dimensional array of numerical values~\cite{kolda2009tensor} and can be considered a higher-order generalization of a matrix. Various real-world datasets, including sensory data~\cite{jang2020d}, stock market history~\cite{jang2021fast}, and extracted features from motion videos~\cite{wang2012mining}, are represented as tensors.

%Compression of tensors is essential due to its heavy costs for saving.
% Saving the positions and values of all entries requires a space proportional to the number of all entries, which is the product of all dimensions.
% This becomes a large value when the dimensions of the tensor are large or the order of it is high.
% For example, to save music data~\cite{defferrard2016fma} of size $704 \times 2049 \times 7997$ in a double-precision float-point format consumes $92$GB,
% Saving it in this manner gives a big burden to mobile devices and IoT devices where memory resources are precious.
% In addition, huge communication costs may require to transmit data through the Internet, which is quite common in real-world situations. 

Many real-world tensors are large, making compression crucial for efficient storage. Storing a tensor in its uncompressed form requires space proportional to the total number of entries, which grows exponentially with the tensor's order. 
For instance, storing a tensor of size $704 \times 2049 \times 7997$ from the music dataset \cite{defferrard2016fma} as a double-precision floating-point value per entry consumes approximately $92$GB. 
%For instance, storing a tensor of size \red{$7994 \times 1025 \times 700$} from the music dataset\red{\cite{jang2021fast}} as a double-precision floating-point value per entry consumes approximately \red{$46$}GB. 
This storage approach places a significant burden on memory-limited devices such as mobile and IoT devices. %, where memory resources are valuable. 
Moreover, transmitting such large datasets online
can lead to substantial communication costs.

Consequently, a variety of tensor compression methods have been devised, but many of them rely on specific assumptions regarding input data, particularly in terms of order, sparsity, rank, and smoothness.  For instance, numerous compression methods \cite{xu1998truncated,sun2007less} assume that the input tensor is of order two, i.e., a matrix. 
Additionally, many other compression methods \cite{smith2015splatt,kwon2023neukron} are tailored for compressing sparse tensors, which are tensors with a majority of zero entries. 
%Another category of compression algorithms, particularly those based on tensor decomposition \cite{hitchcock1927expression,tucker1963implications,oseledets2011tensor,zhao2016tensor}, assume the input tensor has (approximately) a low-rank structure. 
Another category of compression methods based on tensor decomposition \cite{hitchcock1927expression,tucker1966some,oseledets2011tensor,zhao2019learning} assumes that the input tensor has (approximately) a low-rank structure. 
%Furthermore, some compression methods \cite{ballester2019tthresh,zhao2021optimizing,ma2019image,bhaskaran1997image} presume that tensor entries exhibit smooth variations, meaning that adjacent entries tend to have very similar values, as in images and videos.
Furthermore, some compression methods \cite{ballester2019tthresh,zhao2021optimizing,ma2019image,bhaskaran1997image} presume that tensor entries exhibit smooth variations, meaning that adjacent entries tend to be similar, as in images and videos.
However, many real-world tensors do not necessarily conform to these assumptions, as shown in Section~\ref{sec:exp:tradeoff}.

\textit{How can we compactly compress tensors  with minimal reconstruction error, without imposing strong assumptions on data properties?}
To address this question, we present \method, a lossy compression algorithm for general tensors that do not depend on strict input data assumptions.
To achieve this, we first introduce Neural Tensor-Train Decomposition (\nttd). In contrast to the original Tensor-Train Decomposition, where factor matrices are fixed for all entries of a mode index, we use a recurrent neural network to obtain these matrices, making them dependent on the other mode indices of the entries. This modification allows \nttd to effectively approximate input tensors with a limited number of parameters, even when they exhibit high rank.
Secondly, we fold the input tensor into a higher-order tensor, further reducing the number of parameters needed for \nttd. Lastly, we introduce a reordering algorithm for the mode indices of the input tensor, which uncovers patterns that \nttd can exploit for accurate approximation of the tensor entries.
The output of \method consists of (a) the parameters of the recurrent neural network in \nttd and (b) the mapping between the original and reordered mode indices, which are used to approximate the entries of the input tensor.

We demonstrate the advantages of \method through complexity analysis and comprehensive experiments on 8 real-world datasets, which are summarized as follows: 
\begin{itemize}[leftmargin=*]
    \item \textbf{Concise:} It gives up to 7.38$\times$ smaller output than the best-performing competitor with similar approximation error.
    \item \textbf{Accurate:} It outperforms the most accurate competitor in terms of approximation error by up to 3.33$\times$, while achieving comparable compression sizes.
    \item \textbf{Scalable:} Its empirical compression time scales linearly with the number of entries, while the reconstruction time scales logarithmically with the largest mode size.
\end{itemize}
We present related works in Section~\ref{sec:rel_works}, give preliminaries in Section~\ref{sec:prelims},
propose \method in Section~\ref{sec:expr}, review experiments in Section~\ref{sec:method}, and make conclusions in Section~\ref{sec:conclusion}.

% Note that \method is a general method that can be applied to tensors of any mode size, order, and density although its main targets are dense tensors.
%\smallsection{Reproducibility} The code and datasets are available at \cite{appendix}.

%The rest of the paper is organized as follows.

%\smallsection{
%\jihoon{Comparison with Image/Video Compression Algorithms}}
%\jihoon{
%[jihoon: I think that we need to clarify why we did not choose video compression algorithms (mp4, mkv) or %image compression algorithms (jpg, png) as the baseline algorithms]
%}

\section{Related work} \label{sec:rel_works}
\vspace{-1mm}

\smallsection{Compression methods for low-rank tensors}
%Tensor decomposition extends truncated SVD~\cite{xu1998truncated} to higher-order tensors and compresses them lossily into smaller matrices and tensors. %, reconstructing the original tensor through linear operations. 
Tensor decomposition compresses tensors lossily into smaller matrices and tensors.
CP Decomposition (CPD)~\cite{carroll1970analysis} approximates the input tensor by the weighted sum of the outer products of columns of the same order in the matrices. That is, the outer product of the $i$-th columns of the matrices is computed for each $i$.
Tucker Decomposition (TKD)~\cite{tucker1966some} generalizes CPD by allowing outer products of columns of different orders, and TTHRESH~\cite{ballester2019tthresh} further compresses the outputs of TKD using run-length and arithmetic coding.
Tensor Train Decomppsition (TTD)~\cite{oseledets2011tensor} approximates each entry by the product of matrices that vary depending on mode indices, while
Tensor Ring Decomposition (TRD)~\cite{zhao2019learning} uses the trace of the product for approximation.
That is, if two entries share $k$ mode indices, $k$ matrices are shared for their approximation.
% Tensor decomposition lossily compresses a tensor to smaller-size matrices and tensors.
% The original tensor is reconstructed by their linear operations.
% CP decomposition~\cite{carroll1970analysis} approximates the input tensor with the sum of the outer products of columns in the factor matrices.
% Tucker decomposition~\cite{tucker1966some} is a higher-order version of PCA and decomposes a tensor into small factor matrices and a (core) tensor. 
% An entry of the original tensor is approximated by the sum of the products of entries in the factor matrices and the core tensor.
% \ttd~\cite{oseledets2011tensor} and Tensor Ring (TR) decomposition~\cite{zhao2016tensor} use the product of matrices where matrices depend on the index of the target entry. 
% \ttd uses a product itself while TR decomposition uses the trace of the product to approximate an entry.
%\ttd has been widely used to compress neural networks such as CNN~\cite{wang2020compressing}, RNNs~\cite{yang2017tensor, yu2017long}, convolutional LSTM~\cite{su2020convolutional}, embedding layers of recommendation models~\cite{yin2021tt}, node embeddings of GNNs~\cite{yin2022nimble}, and MLPs~\cite{wang2021nonlinear}.
%Note that our work is orthogonal to those applications because they do not change the decomposition procedure.
The approximation power of these methods is limited by their reliance on low-rank assumptions and linear operations, especially when the input tensor does not have a low rank and the parameter size %(i.e., the size of the outputs) 
is restricted, as shown in Section~\ref{sec:exp:tradeoff}.

%Moreover, they assume that the input tensor has a low rank, and when this assumption is not met, their approximation accuracy suffers, 

%cannot capture non-linear latent patterns inherent in real-world data.
%Thus their compression abilities are rather limited 

% Since the aforementioned methods approximate the input tensor by only using linear operations, they cannot capture non-linear latent patterns inherent in real-world data.
% Thus their compression abilities are rather limited as empirically verified in Section~\ref{sec:exp:tradeoff}.
%One of the common limitations of the aforementioned methods is that they approximate the input matrix or tensor using only linear operations. 
%Therefore, they have limitations on reducing their compression ratios and approximation errors, and we further demonstrate the empirical performance on real-world matrices and tensors in Section~\ref{sec:expr}.

\smallsection{Compression methods for smooth tensors}
Numerous tensor compression techniques assume smoothness in the input data, meaning that adjacent entries tend to have similar values. 
%\red{be similar.}
This assumption is especially prominent for image and video data. 
%\red{images and videos.}
Compression of images and videos constitutes a distinct and advanced field of study \cite{bhaskaran1997image,ma2019image}, and consequently, this paper focuses on compression methods for tensors that are neither images nor videos.
Smoothness is also assumed when compressing scientific simulation data \cite{zhao2021optimizing}. 
A notable example, SZ3~\cite{zhao2021optimizing} performs interpolation for each entry, based on smoothness, and compactly but 
 lossily encodes the errors using Huffman coding. 
Naturally, the compression power of these methods diminishes when the smoothness assumption is not met, as shown empirically in Section~\ref{sec:exp:tradeoff}.

\smallsection{Compression methods for sparse tensors}
%Many real-world tensors are sparse, meaning that the majority of their entries are zero. 
Many real-world tensors are sparse, i.e. the majority of their entries are zero. 
This characteristic is commonly leveraged in the aforementioned tensor decomposition methods for speed and memory efficiency \cite{bader2008efficient,kolda2008scalable,zhang2020fast} without affecting the compressed outputs. 
%Besides tensor decomposition methods, Smith et al.~\cite{smith2015splatt} suggest using a tree to compactly represent a sparse tensor. In this tree, leaves correspond to non-zero entry values, while non-leaf nodes represent their mode indices. This approach can be considered as lossless compression. 
NeuKron~\cite{kwon2023neukron} optimizes the entries of a small tensor, referred to as a seed tensor, so that its generalized Kronecker power closely approximates the input tensor. To improve the fit, NeuKron first reorders the mode indices in the input tensor. NeuKron leverages the sparsity of the input tensor in three ways:
(a) it builds on the observation that real-world sparse matrices exhibit self-similarity and can thus be approximated by Kronecker powers~\cite{leskovec2007scalable}, (b) it reorders mode indices based on the sparsity patterns of the sub-tensors formed by the entries with each mode index, and (c) it enables rapid computation of the objective function by taking advantage of the sparsity.
Therefore, NeuKron has been applied only to extremely sparse tensors where the ratio of non-zero entries is less than 0.00354 \cite{kwon2023neukron}.
NeuKron and our proposed \method share similarities in that they both employ reordering and generalization using auto-regressive models. 
However, \method generalizes \ttd, instead of Kronecker powers, and reorders mode indices based on the entry values, not on sparsity patterns. It should be noticed that, unlike Kronecker powers, \ttd has been used to model various real-world tensors without being limited to sparse ones.
Due to these differences, \method significantly outperforms NeuKron in terms of both compressed size and approximation accuracy for tensors that are not extremely sparse, as demonstrated in Section~\ref{sec:exp:tradeoff}.

\smallsection{Usage of \ttd for neural networks}
%\ttd~\cite{oseledets2011tensor} has been utilized to compress neural-network parameters in the form of tensors, including (a) weight matrices of fully connected layers \cite{novikov2015tensorizing} and Recurrent Neural Networks \cite{tjandra2017compressing,yang2017tensor,zhao2020tth}, and (b) embeddings in recommender systems \cite{yin2021tt} and Graph Neural Networks \cite{yin2022nimble}.
\ttd~\cite{oseledets2011tensor} has been utilized to compress neural-network parameters in the form of tensors, including (a) weights of fully connected layers \cite{novikov2015tensorizing} and Recurrent Neural Networks \cite{yang2017tensor}, and (b) embeddings in recommender systems \cite{yin2021tt} and Graph Neural Networks \cite{yin2022nimble}.
% While they employ \ttd for compressing neural networks, in this paper, we employ a neural network to enhance the expressiveness of \ttd.
These works employ \ttd for
compressing neural networks, which is distinctly different from
our approach of employing a neural network to enhance the
expressiveness of \ttd.

% Novikov et al~\cite{novikov2015tensorizing} attempted to compress the weight matrix of the fully connected layer with it.
% Tjandra et al~\cite{tjandra2017compressing}, Yang et al~\cite{yang2017tensor}, and Zhao et al~\cite{zhao2020tth} use TT decomposition to compress the weight matrices of RNN.
% In addition, embedding layers in a recommendation system~\cite{yin2021tt} and node embeddings with GNN~\cite{yin2022nimble} can be compressed by TT decomposition.
% Those works replace the weight matrices with TT decomposition and do not change TT decomposition itself.
% On the contrary, our work uses a neural network to further lighten up TT decomposition without loss in accuracy.

% they  
% Similar to the aforementioned tensor decomposition methods, each entry of the tensor is approximated by linear operations of matrices, leading to unsatisfactory approximation errors or compressed size.x

\section{preliminaries} \label{sec:prelims}
In this section, we introduce preliminaries, followed by a formal definition of the considered problem. Frequently-used notations are listed in Table~\ref{tab:symbol}.
For any positive integer $n$, $[n]:=\{0,\cdots,n-1\}$ denotes the set of integers from 0 to $n-1$.

%We describe TT-decomposition and the problem definition of this paper in this section.
%We describe a definition and an approximate solution of the Traveling Salesman Problem (TSP) in section~\ref{sec:TSP} of Appendix.

\subsection{Basic Concepts and Notations}
\label{sec:prelim:concepts}

\smallsection{Matrix and tensor}
We denote matrices in boldface capital letters.
Given a real-valued matrix $\mathbf{M}$ of size $N_1 \times N_2$, the entry located in the $i$-th row and $j$-th column is denoted by $\mathbf{M}(i, j)$. 
Tensors are denoted by boldface Euler script letters.
The order of a tensor refers to the number of modes.
Let $\tensor{X}$ be a real-valued $d$-order tensor of size $N_1 \times \cdots \times N_d$.
The entry at the $(i_1, \cdots, i_d)$-th position of $\tensor{X}$ is denoted by $\tensor{X}(i_1, \cdots, i_d)$.

\smallsection{Slicing and reordering a tensor} \label{sec:prelim:concept:slice}
%\jihoon{
For a mode-$j$ index $i\in[N_j]$, $\cslice{\tensor{X}}{j}{i} \in \mathbb{R}^{N_1 \cdots  N_{j-1} \times N_{j+1}  \cdots  N_{d}}$ denotes the $i$-th slice of $\tensor{X}$ along the $j$-th mode, i.e., 
%\tensor{X}^{(j)}(i)
%\cslice{\tensor{X}}{j}{i}(i_1, \cdots, i_{j-1}, i_{j+1}, \cdots, i_d) 
%\cslice{\tensor{X}}{j}{i} \coloneqq \tensor{X}(i_1, \cdots, i_{j-1}, i, i_{j+1}, \cdots, i_d) .
$\cslice{\tensor{X}}{j}{i} \coloneqq \tensor{X}(:_1, \cdots, :_{j-1}, i, :_{j+1}, \cdots, :_d)$, where $:_{k}$ indicates all possible mode-$k$ indices (i.e., those in $[N_k]$).
%}
%\jihoon{
We consider reordering of mode indices. Let $\reorder{\tensor{X}}$ denote the tensor reordered from $\tensor{X}$ by a set $\order = \{\pi_1, \cdots, \pi_d\}$ of reordering functions, where each $\pi_i: [N_i] \rightarrow [N_i]$ is a bijective function from the set of the mode-$i$ indices to themselves. 
%In the reordered tensor $\reorder{\tensor{X}}$, the $(i_1, \cdots, i_d)$-th entry corresponds to the $(\pi_1(i_1), \pi_2(i_2), ..., \pi_d(i_d))$-th entry of the tensor $\tensor{X}$.
In $\reorder{\tensor{X}}$, the $(i_1, \cdots, i_d)$-th entry corresponds to the $(\pi_1(i_1), \pi_2(i_2), ..., \pi_d(i_d))$-th entry of $\tensor{X}$.

\smallsection{Frobenius norm}
The Frobenius norm $\fnorm{\tensor{X}}$ of $\tensor{X}$ is defined as the squared root of the squared sum of all  its entries, i.e.,
\begin{equation}
\fnorm{\tensor{X}} = \sqrt{\sum_{(i_1, \cdots, i_d) \in [N_1] \times \cdots \times [N_d]} \big(\tensor{X}(i_1, \cdots, i_d)\big)^{2} }.
\end{equation}
%Similarly, the Frobenius norm $\fnorm{X}$ is computed as that of $\mathbf{X}$. 

\subsection{Tensor-Train Decomposition (\ttd)}
\label{sec:prelim:ttd}
%A real-valued $d$-order tensor of size $\shape$ is represented as $\mathcal{X}$.
Tensor-Train Decomposition (\ttd)~\cite{oseledets2011tensor} decomposes a given $d$-order tensor $\tensor{X}$ into $d$ tensors $\tensor{G}_1$, $\cdots$, $\tensor{G}_d$, called \ttc, so that each entry of $\tensor{X}$ is approximated as follows:
\begin{equation}
    %\mathcal{X}(i_1, \cdots, i_d) \approx \tensor{G}_1(:,i_1,:)\tensor{G}_2(:,i_2,:) \cdots \tensor{G}_d(:,i_d,:)
    \mathcal{X}(i_1, \cdots, i_d) \approx \cslice{\tensor{G}_1}{2}{i_1} \cslice{\tensor{G}_2}{2}{i_2} \cdots \cslice{\tensor{G}_d}{2}{i_d},
\end{equation}
where, for all $k$, $\tensor{G}_k \in \mathbb{R}^{r_{k-1}\times N_k \times r_k}$, and $\cslice{\tensor{G}_k}{2}{i} \in \mathbb{R}^{r_{k-1} \times r_k}$ is the $i$-th slice of $\tensor{G}_k$ along the second mode.
%
%$\tensor{G}_k(:,i,:) \in \mathbb{R}^{r_{k-1} \times r_k}$ indicates $i$-th core matrix of $\tensor{G}_k$.
%where $\tensor{G}_1, \cdots, \tensor{G}_d$ are \ttc of the decomposition.
%$\tensor{X}(i_1, \cdots, i_d)$ is the $(i_1, \cdots, i_d)$-th entry of $\tensor{X}$ and .
%
%The $k$-th TT core, $\mathbf{G}_k$, is a real-valued 3-order tensor of size $r_{k-1}\times N_k \times r_k$ ($r_0=r_d=1$) where the values ${\{r_k\}}_{k=0}^d$ are called TT ranks.
%Hence, the number of parameters of \ttd is $\sum_{k=1}^d r_{k-1}N_kr_k$.
% Although each $r_i$ can be an arbitrary positive integer, we set all TT ranks to $\rank$ in the following content. 
% Note that \ttd can be reduced to Kronecker products by setting all $r$ to 1.
% Therefore, \ttd is a generalization of Kronecker products.
Note that $r_0$ and $r_d$ are always set to $1$.
For simplicity, in this paper, we unify all other TT ranks (i.e., $r_1, \cdots, r_{d-1}$) to a single value denoted by $R$.
%The representative optimization algorithm for  \ttd is TT-SVD~\cite{oseledets2011tensor}, which aims to obtain $\tensor{G}_1, \cdots, \tensor{G}_d$ that satisfy the following condition for a prescribed accuracy $\epsilon$:
The representative optimization algorithm for  \ttd is TT-SVD~\cite{oseledets2011tensor}, which aims to obtain $\tensor{G}_1, \cdots, \tensor{G}_d$ satisfying $\fnorm{\tensor{X} - \tensor{\tilde{X}}_{TT}} \leq \epsilon \fnorm{\tensor{X}}$ for a prescribed accuracy $\epsilon$,
%\begin{equation}
 %   \fnorm{\tensor{X} - \tensor{\tilde{X}}_{TT}} \leq \epsilon \fnorm{\tensor{X}},
%\end{equation}
where $\tensor{\tilde{X}}_{TT}$ is the approximated tensor by TTD.
%\jihoon{[TODO: I think we need more detail (if possible).]} 
In TT-SVD, Truncated SVD is applied after reshaping a tensor to a matrix.
\ttd is naturally used as a lossy tensor compression algorithm, with the compressed results being the entries of the TT-core tensors. The number of these entries is $R^2\sum_{k=1}^d N_k= O(dNR^2)$, where $N$ represents the maximum mode length. 
% Note that the compressed size increases linearly with the number of the mode indices and the order of the tensor. 
%In our proposed method, the compressed size scales
%Therefore, the method may be inefficient as the sizes and the orders of the tensors grow. 
% We will further discuss this challenge in C1 of Section \ref{sec:method:overview}.

% The traveling salesman problem~\cite{hoffman2013traveling} is formally defined as follows.
% \begin{problem} \label{prob:TSP}
% \normalfont{(Traveling Salesman Problem)}

% \noindent \textbf{Given}: a complete loopless undirected graph $G=(V,E)$ with a weight function $w: E \rightarrow R^+$ that maps each edge to a non-negative real value.

% \noindent \textbf{Find}: a Hamiltonian cycle, $H$, that visits each vertex exactly once.

% \noindent \textbf{to Minimize}: the weight function $w(H)=\sum_{e \in H} w(e)$ 
% \end{problem}
%It is well-known that the TSP problem belongs to NP-complete.

%There is a 2-approximation solution for Metric TSP of which running time is polynomial.

\begin{table}[t]
\vspace{-5mm}
\caption{Symbol description}
\centering
%\scalebox{0.9}{
     \begin{tabular}{c|l} 
         \toprule
         \textbf{Symbol} & \textbf{Description} \\ 
         \midrule
            $\mathbf{A} \in \mathbb{R}^{N_1 \times N_2}$ & $N_1$-by-$N_2$ matrix \\
            $\mathbf{A}(i,j)$ & $(i,j)$-th entry of $\mathbf{A}$ \\
            $\tensor{X} \in \mathbb{R}^{\shape}$ & tensor \\
            $d$ & order of $\tensor{X}$ \\
            $\tensor{X}(i_1, \cdots, i_d)$ & $(i_1, \cdots, i_d)$-th entry of $\tensor{X}$ \\        
            $N_{\text{max}}$ & maximum length of modes in $\tensor{X}$ \\
        %\midrule
            $\slice{j}{i}$ & $i$-th slice of $\tensor{X}$ along the $j$-th mode\\
            $[n]$ & set of consecutive integers from $0$ to $n-1$ \\
            
        %\midrule
            $\order = (\pi_{1}, \cdots, \pi_{d})$ & set of reordering functions for a tensor\\
            $\pi_{i}$ & reordering function for the $i$-th mode indices \\
%            \multirow{2}{*}{$\pi_{i}$} & reordering function that maps the $i$-th mode  \\ 
%            & indices of $\reorder{\tensor{X}}$ to those of $\tensor{X}$ \\
        %\midrule       
            $\reorder{\tensor{X}}$ & reordered tensor of $\tensor{X}$ by $\order$\\
            $\fnorm{\tensor{X}}$ & Frobenius norm of $\tensor{X}$ \\
        %\midrule            
            %$\tensor{G}_{1}, \cdots, \tensor{G}_{d}$ & \ttc \\
            $R$ & ranks of \ttc \\         
            $h$ & hidden dimension of LSTM \\
            $\model$ & set of the parameters of \nttd \\ 
            $\tensor{\Tilde{X}}$ & approximated tensor of $\tensor{X}$ by $\method$\\
            $\fold{\tensor{X}}$ & folded tensor of $\tensor{X}$ \\
            $d'$ & order of $\fold{\tensor{X}}$\\
            $\mathbf{T}_i$ & $i$-th TT core generated by $\model$ \\
            %\midrule
%            \multirow{2}{*}{$\slice{j}{i}$} & $i$-th slice of $\tensor{X}$ from the point of view  \\            
 %           & of $j$-th mode\\
            %\midrule                                   
            %\multirow{2}{*}{$\refunc{i}$} & function for $i$-th mode which maps the indices of $\reorder{\tensor{X}}$\\
            %& to the indices of $\tensor{X}$ \\            
         \bottomrule
     \end{tabular}
%     }
     \label{tab:symbol}
\end{table}

\subsection{Problem Definition}
We provide the formal definition of the lossy tensor compression problem addressed in this paper as follows:
%The lossy tensor compression that we address in this paper is defined in Problem~\ref{prob:main}.

% \begin{problem} \label{prob:main}
% \normalfont{(Lossy Compression of a Tensor)}

% \noindent \textbf{Given}: a tensor $\mathcal{X} \in \mathbb{R}^{\shape}$ and a parameter budget $b$,

% \noindent \textbf{Find}: a model of which the number of parameters is lower than $b$.

% \noindent \textbf{to Minimize}: the error $\fnorm{\tensor{X} - \tensor{\tilde{X}}}^2$ where $\tensor{\tilde{X}}$ is the approximation by the model.
% \end{problem}

\vspace{0.5mm}
\noindent\fbox{%
\parbox{0.97\columnwidth}{%
\vspace{-2mm}
\begin{problem}\label{prob:main}
\textsc{\normalfont{(Lossy Compression of a Tensor)}} 
\begin{itemize}[leftmargin=*]
    \item \textbf{Given:} a tensor $\mathcal{X} \in \mathbb{R}^{\shape}$,
    % \indent \hspace{9mm} (2) a parameter budget $\beta$,
    \item \textbf{Find:} the compressed data $D$ % a model $\model$
    \item \textbf{to Minimize:}
    (1) the size of $D$
    \\
    \indent \hspace{15mm} (2) the approximation error (e.g., $\fnorm{\tensor{X} - \tensor{Y}}^2$,\\
    \indent \hspace{17.5mm} where $\tensor{Y}$ is the tensor reconstructed from $D$)
    %compressed data $D$
%To minimize both the size of $D$ and 
%reconstruction error $\tilde{X}_{D}$    
    % \item \textbf{to Minimize:} the approximated error $\fnorm{\tensor{X} - \tensor{\tilde{X}}}^2$ where $\tensor{\tilde{X}}$ is the estimation by $\model$.
\end{itemize}
\vspace{-2.5mm}
\end{problem}
}%parbox
}
\vspace{0.1mm}

In \method, the compressed data $D$ is composed of (a) the set $\model$ of parameters in Neural Tensor-Train Decomposition (\nttd), and (b) the set $\order$ of reordering functions.
%In the next section, we further describe the details of each component.

\section{Proposed Method} \label{sec:method}
In this section, we propose \method, a precise and concise tensor-compression algorithm.
%a lossy compression method which is accurate and concise for dense tensors.

\subsection{Overview}
\label{sec:method:overview}

For our method, we focus on the following challenges in devising a compression algorithm based on \ttd:
%For \method, we focus on the following challenges in devising a compression algorithm based on \ttd:
\begin{enumerate}[leftmargin=6mm]
    \item [Q1.] \textbf{Expressiveness.}
    Since \ttd (see Section~\ref{sec:prelim:ttd}) relies only on linear operations of small-sized matrices, its expressiveness is limited.\footnote{For a more in-depth discussion of expressiveness, refer to \cite{oseledets2011tensor}.} How can we enhance its expressiveness to closely approximate a wider variety of tensors?
    
    \item [Q2.] \textbf{Conciseness.} 
    The number of parameters of \ttd increases with the order, mode length, and TT-rank.
    Can we further reduce the parameter number for additional compression?    
    %in a more accurate manner?
    %How can we reduce the increases in the number of parameters?
    \item [Q3.] \textbf{Data arrangement}.
    The approximation by \ttd depends on the arrangement of tensor entries since it shares \ttc based on mode indices. How can we re-arrange tensor entries to achieve a more accurate approximation?
\end{enumerate}

%We address these challenges with the following ideas:
\method is based on the following ideas for addressing each of the aforementioned challenges:
\begin{enumerate}
    \item [A1.] \textbf{Neural Tensor-Train Decomposition (\nttd).}
    We incorporate an auto-regressive neural network into \ttd to improve its expressiveness while maintaining the number of parameters small (Section~\ref{sec:method:model}).
    \item [A2.] \textbf{Folding}. We fold the input tensor into a higher-order tensor to reduce the number of parameters and thus the size of the compressed output (Section~\ref{sec:method:folding}).
    %but maintaining the size of it small.
    \item [A3.] \textbf{Reordering}. We aid the model in better fitting the folded tensor by reordering the mode indices of the input tensor (Section~\ref{sec:method:reordering}).
\end{enumerate}

We summarize the overall process of \method in Algorithm~\ref{algo:overview}.
First, \method initializes the \nttd model $\model$ and the reordering functions $\order$.
Next, it reorders and folds $\tensor{X}$, creating a higher-order tensor $\fold{\reorder{\tensor{X}}}$.
Subsequently, the model parameters and reordering functions are updated to minimize the approximation error.
\method repeats this process until convergence is reached, meaning that the approximation error no longer exhibits significant changes.
The outputs of the compression process are optimized $\model$ and $\order$ based on which  each tensor entry is approximated in logarithmic time, as proven in Section~\ref{sec:method:theorem}.

To simplify the explanation, we assume that the input tensor ${\tensor{X}}$ is already properly ordered 
%when describing how to fit and fold the tensor 
in Sections~\ref{sec:method:model} and \ref{sec:method:folding}. Afterward, we elaborate on how to initialize and update reordering functions $\order$ in Section~\ref{sec:method:reordering}.

%we assume the tensor is appropriately reordered by the orders $\order$ of our approach for ease of explanation. 
%Thus, we first describe how to fold the tensor and fit it at Section~\ref{sec:method:folding}~and~\ref{sec:method:model}, respectively, and then explain how to reorder at Section~\ref{sec:method:reordering}. 

%First, the parameters of the model and the orders of the tensor are initialized.
%The input tensor is reordered and folded to a higher-order tensor.
%Then, the parameters and the orders are repeatedly updated in turn until the convergence.

%To improve the expressiveness of the original TT-decomposition, 
%\subsection{Folding a Tensor} 

%The number of parameters of \ttd for $\fold{\tensor{X}}$ is $\rank(\rank\sum_{k=2}^{d'-1}\prod_{l=1}^d n_{l,k} + \prod_{k=1}^d n_{k,1} + \prod_{k=1}^d n_{k,d'})$ which is much smaller than that for $\tensor{X}$, $R(R\sum_{k=2}^{d-1} \prod_{l=1}^{d'} n_{k,l} + \prod_{k=1}^{d'}n_{1,k} + \prod_{k=1}^{d'}n_{d,k})$.

%The example comparisons with the real-world data are available at the last 2 columns of Table~\ref{tab:datasets}.

%\subsection{Model Approximates a Folded Tensor} 

\begin{figure}[t]    
    \centering
    \vspace{-4mm}
    \includegraphics[width=0.8\linewidth]{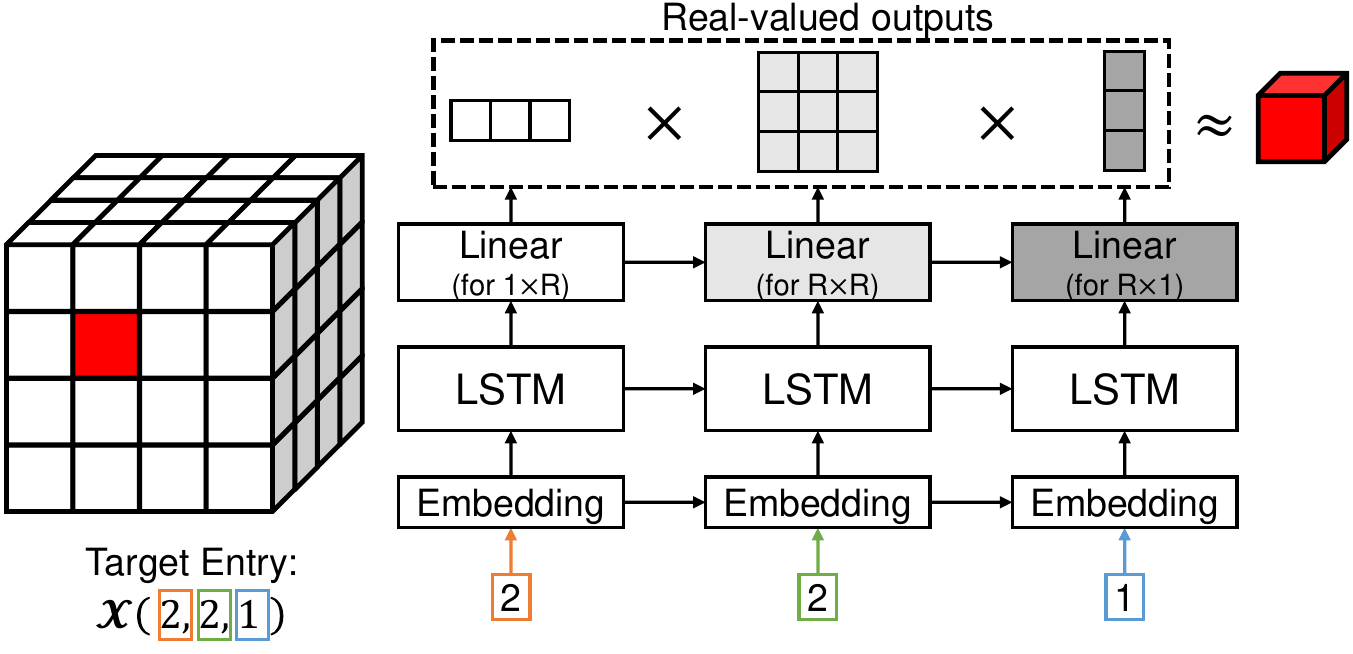}    
    \vspace{-2mm}
    \caption{The overall process of Neural Tensor-Train Decomposition (\nttd), which is the key component of \method.
    In order to generate \ttc for approximation, the mode indices of a target entry are encoded through an embedding layer and fed into an LSTM layer. 
    Each TT core is obtained through a linear layer from the LSTM output. An approximation is computed by the matrix product of the \ttc.
    Note that the parameters of \nttd are those of the neural networks, not \ttc themselves.}
    \label{fig:neutt_model}
\end{figure}

\begin{algorithm}[t]
\small
\caption{Overview of \method}\label{algo:overview}
\SetKwInput{KwInput}{Input}
\SetKwInput{KwOutput}{Output}
\KwInput{a tensor $\tensor{X} \in \mathbb{R}^{\shape}$}
\KwOutput{orders $\order$ and an NTTD model $\model$}
intialize $\model$ and $\order$ \label{algo:overview:init} \Comment*[f] 
{\small Section~\ref{sec:method:train:initorder} and Alg. 4 of \cite{appendix}} \\
 \While{${fitness}$ does not converge}{
     $\fold{\reorder{\tensor{X}}} \leftarrow $ reorder and fold $\tensor{X}$ \Comment*[f]{\small Section~\ref{sec:method:folding}} \\
    Update $\model$ \label{line:update:model} \Comment*[f]{\small Section~\ref{sec:method:model}} \\
    Update $\order$ \label{line:update:order} \Comment*[f]{\small Section~\ref{sec:method:train:order} and Alg.~\ref{algo:update:order}} \\
    $\tensor{\recon{X}} \leftarrow$ approximation using $\model$ and $\order$ \\
    ${fitness} \leftarrow 1 - \fnorm{\tensor{X} - \tensor{\recon{X}}} / \fnorm{\tensor{X}}$ \\
 }
 \Return $\model$ and $\order$
\end{algorithm}

\subsection{Neural TT Decomposition~(\nttd) for Fitting a Tensor}
\label{sec:method:model}
\method is a lossy compression algorithm, and its primary goal is to accurately approximate tensor entries with a small number of parameters. As a key component of \method, we propose Neural Tensor-Train Decomposition~(\nttd), a generalization of Tensor-Train Decomposition (TTD) that incorporates an auto-regressive neural network.

\begin{algorithm}[t] 
\small
\caption{Approximation of an entry by \nttd ($\theta$)} \label{algo:approx}
%\caption{Entry approximation by the model $\model$} \label{algo:approx}
\SetKwInput{KwInput}{Input}
\SetKwInput{KwOutput}{Output}
\KwInput{(a) an index $(i_1, \cdots, i_{d})$ \\ 
(b) parameters of embedding layers ($ E_1, \cdots, E_{d}$) \\
(c) parameters of an \texttt{LSTM} layer\\
%(d) parameters of fully-connected layers ($\texttt{FC}_1$, $\texttt{FC}$, and $\texttt{FC}_{d'})$
(d) parameters of linear layers ($\mat{W}_{\{1,d\}}$, $\mat{W}$, $\vect{b}_{\{1,d\}}$, $\vect{b}$)
%($\mathbf{W}_{\{1,d\}}$, $\mathbf{W}$, $\mathbf{b}_{\{1,d'\}}$, $\mathbf{b}$)
}
\KwOutput{an approximated entry $\model(i_1, \cdots, i_{d})$}
\For{$k \leftarrow 1$ \normalfont{{to}} $d$\label{algo:approx:emb:for}}{
    $\vect{e}_k \leftarrow E_k(i_k)$ \label{algo:approx:emb}\\
}
$\vect{h}_1, \cdots, \vect{h}_{d} \leftarrow \texttt{LSTM}(\vect{e}_1, \cdots, \vect{e}_{d})$ \label{algo:approx:lstm}
\Comment*[f]{\small $\vect{h}_{k}\in \mathbb{R}^{h}$ for $1 \leq k \leq d$}
\\
$\mat{T}_1 \leftarrow \mat{W}_1\vect{h}_1 + \vect{b}_1$ \label{algo:approx:linear1}
%$\mathbf{T}_1 \leftarrow \texttt{FC}_{1}(\vect{h}_1)$ \\
\Comment*[f]{\small $\mat{W}_{1}\in \mathbb{R}^{1 \times R \times h}$ and $\vect{b}_{1}\in \mathbb{R}^{1 \times R}$}
\\
\For{$k \leftarrow 2$ \normalfont{{to}} $d-1$}{
    $\mat{T}_k \leftarrow \mat{W}\vect{h}_k + \vect{b}$ \label{algo:approx:linear2}
    %$\mathbf{T}_k \leftarrow \texttt{FC}(\vect{h}_k)$ \\
    \Comment*[f]{\small $\mat{W}\in \mathbb{R}^{R \times R \times h}$ and $\vect{b}\in \mathbb{R}^{R \times R}$}
    \\
}
$\mat{T}_{d} \leftarrow \mat{W}_{d}\vect{h}_{d} + \vect{b}_{d}$ 
\label{algo:approx:linear3}
%$\mathbf{T}_{d} \leftarrow \texttt{FC}_{d}(\vect{h}_{d})$\\
\Comment*[f]{\small $\mat{W}_{d}\in \mathbb{R}^{R \times 1 \times h}$ and $\vect{b}_{d}\in \mathbb{R}^{R \times 1}$}
\\
 \Return $\mathbf{T}_1  \mathbf{T}_2  \cdots  \mathbf{T}_{d}$ \label{algo:approx:return}
\end{algorithm}

Instead of directly learning \ttc as free variables, for each $(i_1, \cdots, i_{d})$-th entry of the tensor, \ttc are obtained as the output of a neural network that takes the mode indices of the entry as an input.
The neural network, which we denote by $\model$, is trained to approximate the entry as follows:
\begin{equation} \label{eqn:approx}
    \tensor{X}(i_1, \cdots, i_{d}) \approx \model(i_1, \cdots, i_{d}) = \mat{T}_1 \mat{T}_2 \cdots \mat{T}_{d},
\end{equation}
where $\mat{T}_1 \in \mathbb{R}^{1 \times R}$, $\mat{T}_2 \in \mathbb{R}^{R \times R}$, $\cdots$, $\mat{T}_{d-1} \in \mathbb{R}^{R \times R}$, and $\mat{T}_{d} \in \mathbb{R}^{R \times 1}$ are the \ttc generated by $\model$.

Detailed procedures of \nttd are given in Algorithm~\ref{algo:approx} and illustrated in Figure~\ref{fig:neutt_model}. 
Note that \nttd consists of embedding, LSTM \cite{hochreiter1997long}, and linear layers.
To encode each mode index $i_k$, our model first looks up the embedding $\vect{e}_k$ from the embedding layer $E_k$ (lines~\ref{algo:approx:emb:for}~and~\ref{algo:approx:emb}).
Then, it feeds the embeddings $\vect{e}_{k}$ into the LSTM layer and obtains the hidden embeddings $\vect{h}_{k}$ for $1 \leq k \leq d$ (line~\ref{algo:approx:lstm}).
After generating TT cores $\mat{T}_{k}$ from $\vect{h}_{k}$ using the linear layers (lines~\ref{algo:approx:linear1}-\ref{algo:approx:linear3}), the product of the TT cores is returned as the approximated entry value (line~\ref{algo:approx:return})\footnote{To further reduce the model size, we share the same embedding parameters for modes of the same length (i.e., 
$E_k=E_j$ if $N_k=N_j$).
The effect of this approach is explored in Section \romanum{5} of \cite{appendix}.}.

Note that we employ an auto-regressive model to allow for the dependency between \ttc and mode indices. 
However, in \nttd, each $\mathbf{T}_k$ depends not only on the mode-$k$ index of the target entry (as in TTD) but also on its mode-$j$ indices for all $j\leq k$. 
%While we use LSTM \cite{hochreiter1997long} in our implementation, any alternatives, including RNN~\cite{rumelhart1986learning}, GRU~\cite{cho2014learning}, and Transformer~\cite{vaswani2017attention}, can be used instead.
%While we employ LSTM \cite{hochreiter1997long}, any alternatives, such as GRU~\cite{cho2014learning} and Transformer~\cite{vaswani2017attention}, can be used instead.
While we employ LSTM \cite{hochreiter1997long}, any alternatives, such as GRU~\cite{cho2014learning} and Scaled Dot-product Attention~\cite{vaswani2017attention}, can be used instead.
In Section \Romannum{8} of \cite{appendix}, 
we examine the performance of \method equipped with alternatives.
%Our model is made up of an embedding layer, LSTM layer, and linear layer.
%The end-to-end process of $\model$ is described in Algorithm~\ref{algo:approx}.
%As you can see in Algorithm~\ref{algo:approx}, $\mathbf{T}_k$ depends on the indices $i_1, \cdots, i_k$ for all integers $k$ from 1 to $d'$ because of LSTM.
%The model uses the same embedding parameters if the sizes of the dimensions are the same to reduce the complexity of it Algorithm~\ref{algo:approx}.
%We compare the results with the case where the different embedding parameters are used for the different modes in {\color{red}Section~\ref{}} and conclude that our choice is better.

\smallsection{Benefits over \ttd}
The \nttd model $\model$ has the following advantages over traditional \ttd: 
\begin{itemize}[leftmargin=*]
    \item {\textbf{Contextual:} In \nttd, as mentioned earlier, each TT core varies depending on not only the current mode index but also all preceding mode indices. 
    For example, consider approximating $\tensor{X}(2, 1, 2)$ and $\tensor{X}(1, 2, 2)$. 
    The TT cores used for the third mode in \nttd are different in these two cases. However, in TTD, the same TT core is used for both cases since the third mode indices are identical.
        By being contextual and non-linear (described below), \nttd is able to model tensors that cannot be easily approximated by TTD, even with more parameters, as shown experimentally in Section~\ref{sec:exp:power}. This improved expressiveness reduces the reliance on structural assumptions about input tensors.
    }
    \item {
        \textbf{Non-linear:}
        \nttd incorporates non-linear operations introduced by the LSTM layer, while TTD does not. This contributes to enhancing the expressiveness of \nttd, enabling \method to better approximate tensor entries. 
        % Our method uses a recurrent neural network (RNN) that includes non-linear operations.
        %RNNs have been widely adopted in research fields that handle sequential data like text, audio, and video due to their superiority~\cite{yu2019review} so using it helps the model to increase accuracy.
    }
    \item {
        \textbf{Concise:}
        \nttd shares parameters (specifically, $\mat{W}$ and $\mat{b}$ in line \ref{algo:approx:linear2} of Algorithm~\ref{algo:approx}) for different modes, which enables the model to be concise with fewer parameters than TTD. In contrast, TTD requires a unique TT core for each mode.
        %This allows our model to reduce its size.
        %s~\ref{algo:approx:emb} and
    }
\end{itemize}
% First, it is possible to generate more diverse \ttc~than \ttd~because the TT core depends not only on the current index but also on previously shown indices.
% Suppose we need to approximate $\tensor{X}(2, 1, 2)$ and $\tensor{X}(1, 2, 2)$ for example.
% The \ttc~used for the third mode in our model is different in two cases while the \ttc~ in \ttd~are the same. 
% Due to the property, our model has better expressiveness than \ttd.
%Second, our method uses a recurrent neural network (RNN) that includes non-linear operations.
%RNNs have been widely adopted in research fields that handle sequential data like text, audio, and video due to their superiority~\cite{yu2019review} so using it helps the model to increase accuracy.
%Finally, our model shares parameters across the different modes in contrast to \ttd~which requires a unique TT core for each mode.
%This property allows the model to reduce its complexity.
%We further describe the theoretical analysis for our model in Section~\ref{sec:method:theorem}.
We empirically examine the contribution of each component of \nttd to approximation accuracy in Section~\ref{sec:exp:ablation}.

\smallsection{Space complexity analysis}
%\smallsection{\red{Space complexity}}
%We present the compressed output size of \nttd in the following theorem. 
We present the compressed output size of \nttd in Theorem~\ref{thm:space:nttd}. 
The hidden dimension of LSTM and the rank of TT cores are denoted by $h$ and $R$.
%\blue{A proof of Theorem~\ref{thm:space:nttd} can be found in Section \romanum{2} of \cite{appendix}.}
\begin{theorem}[Compressed Output Size of \nttD] \label{thm:space:nttd}
The size of the compressed output of \nttd (i.e., the number of its parameters) is $O(h(h + R^2+\sum_{i=1}^{d} N_i))$, which becomes $O(\sum_{i=1}^{d} N_i)$ if we treat $h$ and $R$ as constants.
\end{theorem}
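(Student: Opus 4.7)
The plan is to account for every learnable parameter introduced in Algorithm~\ref{algo:approx}, group them by layer, and sum the contributions. I would work through the network strictly in the order the algorithm invokes the layers, so that nothing is overlooked.

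First, I would handle the embedding layers (lines~\ref{algo:approx:emb:for}--\ref{algo:approx:emb}). Since the vectors $\vect{e}_k$ are fed directly into the LSTM whose hidden dimension is $h$, each $E_k$ stores an $N_k \times h$ table, contributing $h N_k$ parameters. Summing over $k=1,\ldots,d$ gives $h\sum_{i=1}^d N_i$. I would note that the weight-sharing optimization in the footnote (using $E_k=E_j$ whenever $N_k=N_j$) can only decrease this count, so the $O$-bound is unaffected.

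Next, I would count the LSTM parameters (line~\ref{algo:approx:lstm}). A standard LSTM with input dimension $h$ and hidden dimension $h$ maintains four gates, each with an $h\times h$ input-to-hidden matrix, an $h\times h$ hidden-to-hidden matrix, and an $h$-dimensional bias. The total is $4(h^2+h^2+h) = O(h^2)$. Then for the linear read-out layers producing the TT cores (lines~\ref{algo:approx:linear1}--\ref{algo:approx:linear3}), I would read the declared shapes directly: $\mat{W}_1$ and $\mat{W}_d$ each have $Rh$ entries (with biases of size $R$), while the shared middle weight $\mat{W}$ has $R^2 h$ entries (with an $R\times R$ bias), for a combined total of $O(R^2 h)$ parameters; the sharing of $(\mat{W},\vect{b})$ across modes $2,\ldots,d-1$ means this count does not depend on $d$.

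Finally, I would add the three bounds to obtain
\[
    h\sum_{i=1}^d N_i \;+\; O(h^2) \;+\; O(R^2 h) \;=\; O\!\left(h\Big(h + R^2 + \sum_{i=1}^d N_i\Big)\right),
\]
which collapses to $O(\sum_{i=1}^d N_i)$ whenever $h$ and $R$ are treated as constants. There is no real obstacle here; the only subtlety worth being explicit about is justifying that the embedding dimension equals the LSTM input dimension $h$ (which follows from line~\ref{algo:approx:lstm} of Algorithm~\ref{algo:approx}, where $\vect{e}_k$ is consumed by the LSTM directly), and verifying that the parameter-sharing conventions (both in the embeddings and in $(\mat{W},\vect{b})$) do not inflate the stated asymptotic bound.
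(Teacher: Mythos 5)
Your proposal is correct and follows essentially the same route as the paper's own proof: count the embedding tables as $O(h\sum_{i=1}^{d} N_i)$, the LSTM as $O(h^2)$, and the linear read-out layers as $O(hR^2)$, then sum. The extra care you take with the weight-sharing conventions and the embedding dimension only confirms the bound and does not change the argument.
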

\begin{proof}
The embedding layers have $O(\sum_{i=1}^d N_ih)$ parameters.
The LSTM and fully connected layers have $O(h^2 + hR^2)$ parameters.
Hence, the total size is $O(h(h + R^2+\sum_{i=1}^{d} N_i))$.
\end{proof}
\smallsection{Optimization method for $\model$}
We update the parameters $\model$ of \nttd using mini-batch gradient descent to minimize the loss function in Problem~\ref{prob:main}, i.e., $\fnorm{\tensor{X} - \tensor{Y}}^2$, where $\tensor{Y}$ is the tensor approximated by $\model$. %We employ mini-batches since stochastic gradient helps the model converge faster.
%As outlined in Section~\ref{sec:method:overview}, we alternately update $\model$ and the reordering functions $\pi$. After updating $\pi$, we reinitialize the optimizer (spec., Adam \cite{kingma2014adam}) since the loss surface changes after reordering, as detailed in Section~\ref{sec:method:train:order}.
As outlined in Section~\ref{sec:method:overview}, we alternately update $\model$ and the reordering functions $\pi$. After updating $\pi$, we reinitialize the optimizer (spec., Adam \cite{kingma2014adam}) since the loss surface changes after reordering, as detailed in Section~\ref{sec:method:train:order}.

\subsection{Folding Technique for Lightweight \nttd}
\label{sec:method:folding}
\begin{figure}[t]    
    \vspace{-3mm}
    \centering
    \includegraphics[width=\linewidth]{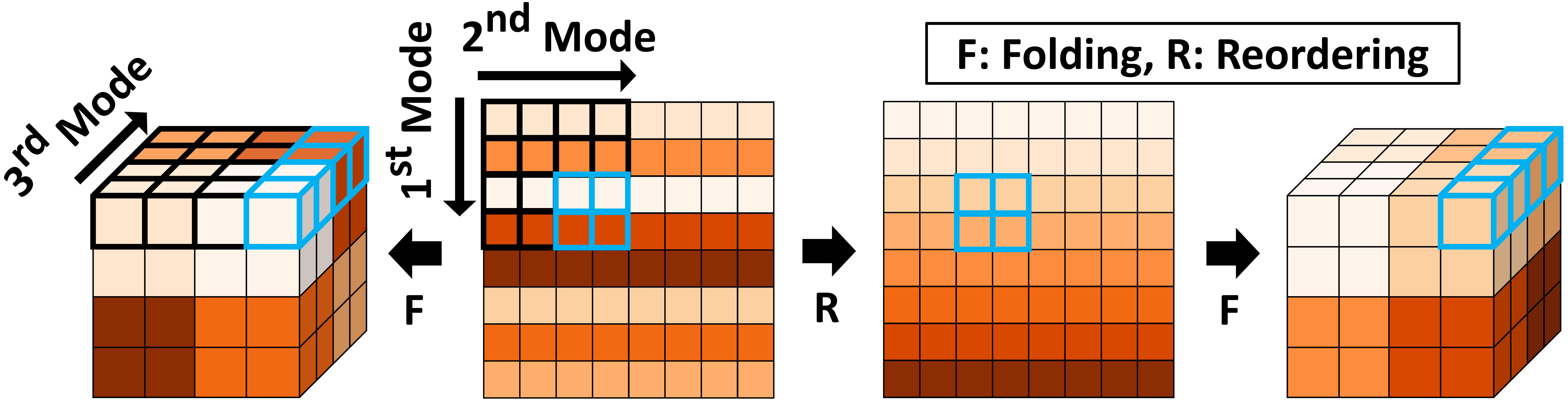}    
    \caption{An example of folding an 8$\times$8 matrix into a 4$\times$4$\times$4 tensor, where entries with similar values have similar colors. Note that when the rows of the matrix are reordered (R) such that adjacent rows are similar, the similar entries are located close to each other when the tensor is folded (F). In particular, similar entries share mode-$1$ and $2$ indices in the folded tensor.
}
    \label{fig:folding}
\end{figure}

%\jihoon{
%As described in ??, for the given matrix $\mathbf{A} \in \mathbb{R}^{N \times M}$, directly applying TT-decomposition to $A$ requires $O(N + M)$ parameters, if we set the all $r_i$'s to be the same integer value.
%Before applying TT-decomposition, \method first transforms the TT-matrix format 
%}

% 
%As described in Section~\ref{sec:prelim:ttd}, a given tensor  can be compressed (or decomposed) into a sequence of multiple TT cores, but their size is $O(\sum_{i=1}^d N_i R^2)$ which is unsatisfactory to Problem~\ref{prob:main}. 

We discuss the folding technique, which serves as the second component of \method. It aims to further reduce the size of the compressed output of \nttd, which is proportional to $\sum_{i=1}^d N_i$ according to Theorem~\ref{thm:space:nttd}. The main idea is to fold the input tensor into a higher-order tensor, maintaining the same number of entries but with smaller mode lengths.
In \method, the \nttd model $\model$ aims to fit the folded tensor rather than the input tensor after the arrangement process discussed in Section~\ref{sec:method:reordering}.
%For simplicity, we treat the hidden dimension $h$ and the rank $R$ as constants throughout this subsection, and we discuss how they affect the complexity at the end of the subsection

%As described in Theorem~\ref{thm:space:nttd}, the size of the compressed output of \nttd is proportional to $\sum_{i=1}^d N_i$.
%It may not be enough to accomplish a goal in Problem~\ref{prob:main} since the mode sizes of the tensor can be arbitrarily large.

%We fold the tensor into a higher-order tensor with smaller mode sizes and fit the folded tensor by \nttd to reduce the compressed size. %, as shown in {{\color{red} Figure~\ref{}}. 

\smallsection{TT-matrix format}
Our folding technique is inspired by the TT-matrix format~\cite{oseledets2011tensor}, which aims to fold a matrix into a tensor for reducing the number of parameters in \ttd.
%$\mathbf{W} \in \mathbb{R}^{N \times M}$
Given a matrix $\mathbf{A}$ of size $N \times M$ where $N=\prod_{k=1}^dn_k$ and $M=\prod_{k=1}^dm_k$, the format folds $\mathbf{A}$ into a $d$-order tensor $\tensor{A}$ of size $n_1m_1 \times \cdots \times n_dm_d$ (see the example with $d=3$ in Figure~\ref{fig:folding}) so that each entry of $\mathbf{A}$ is mapped to an entry of $\tensor{A}$ as follows:
\begin{equation*}
    \mathbf{A}(i, j) = \tensor{A}(i_1m_1 + j_1, \cdots, i_dm_d + j_d),
\end{equation*}
where $i_k \in [n_k] $ and $j_k \in [m_k]$ for each $1 \leq k \leq d$ are those satisfying $i=\sum_{k=1}^d i_k \prod_{l=k+1}^{d}n_l$ and $j=\sum_{k=1}^d j_k \prod_{l=k+1}^{d}m_l$.
The impact of this folding technique on \nttd is discussed below in the context of more general cases.
\smallsection{TT-tensor format}
While the TT-matrix format can be naturally extended to tensors, to the best of our knowledge, there has been no attempt to do so. In this work, we extend the TT-matrix format to tensors by folding the input tensor into a higher-order tensor with smaller mode lengths. We call this process the TT-tensor format.
%There has been no attempt to extend the TT matrix format to tensors under our knowledge and we named the process \textbf{TT tensor format}.
%$\prod_{k=1}^{d'}n_{1,k} \times \cdots \times \prod_{k=1}^{d'}n_{d,k}$
%Given a tensor $\tensor{X}$ of size $N_1 \times \cdots \times N_d$ where $N_k = \prod_{l=1}^{d'}n_{k,l}$, we fold $\mathcal{X}$ into a $d'$-order tensor $\fold{\tensor{X}}$ of size $(n_{1,1} n_{2,1} \cdots n_{d,1}) \times \cdots \times (n_{1,d'} n_{2,d'}\cdots n_{d,d'})$. 
Given a tensor $\tensor{X}$ of size $N_1 \times \cdots \times N_d$ where $N_k = \prod_{l=1}^{d'}n_{k,l}$, we fold $\mathcal{X}$ into a $d'$-order tensor $\fold{\tensor{X}}$ of size $\prod_{k=1}^d n_{k,1} \times \cdots \times \prod_{k=1}^d n_{k,d'}$. 
Then, the mapping between the entries of $\tensor{X}$ and $\fold{\tensor{X}}$ is:
\begin{align} %\label{eqn:idx}
    & \tensor{X}\Big( \sum_{k=1}^{d'} (i_{1,k} \prod_{l=k+1}^{d'}n_{1,l}), \cdots, \sum_{k=1}^{d'} (i_{d,k} \prod_{l=k+1}^{d'}n_{d,l}) \Big)  \label{eqn:idx:folded} \\
    &  \rightarrow \fold{\tensor{X}} \Big(\sum_{k=1}^d (i_{k,1} \prod_{l=k+1}^d n_{l,1}), \cdots, \sum_{k=1}^d (i_{k,d'} \prod_{l=k+1}^d n_{l,d'}) \Big), \nonumber
\end{align}
\begin{comment}
\begin{equation} \label{eqn:idx}
    \tensor{X}\Big( \sum_{k=1}^{d'} (i_{1,k} \prod_{l=k+1}^{d'}n_{1,l}), \cdots, \sum_{k=1}^{d'} (i_{d,k} \prod_{l=k+1}^{d'}n_{d,l}) \Big) 
\end{equation}%\sum_{k=1}^{d'} (i_{2,k} \prod_{l=k+1}^{d'}n_{2,l}),
%of $\tensor{X}$ is mapped into the entry 
\begin{equation} \label{eqn:idx:folded}
    \hspace{-3mm} \fold{\tensor{X}} \Big(\sum_{k=1}^d (i_{k,1} \prod_{l=k+1}^d n_{l,1}), \cdots, \sum_{k=1}^d (i_{k,d'} \prod_{l=k+1}^d n_{l,d'}) \Big)
\end{equation}
\end{comment}
%of the folded tensor $\fold{\tensor{X}}$, 
where $i_{k,l} \in [n_{k,l}]$ for all $k \in \{1, \cdots, d\}$ and $l \in \{1, \cdots, d'\}$.

In \method, we select the new order $d'$ such that the folded tensor has a higher order than the input tensor (i.e., $d' > d$), while maintaining $d' = O(\log N_{\text{max}})$, where $N_{\text{max}}$ represents the maximum mode length in the input tensor $\tensor{X}$.
In real-world tensors, this is usually feasible since their mode lengths are usually much larger than their orders. For instance, a 4-order tensor of size $256 \times 256 \times 256 \times 256$ can be folded into an 8-order tensor with each mode having a length of 16.
It may not always be feasible to construct a folded tensor that meets the above criteria while having the same number of entries as the input tensor. In such cases, the folded tensor may contain extra entries, the values of which are disregarded. For real-world tensors (see Section~\ref{sec:expr}), we initially assign $2$ to $n_{k,l}$ for all $k \in \{1, \cdots, d\}$ and $l \in \{1, \cdots, d'\}$ and modify some of them using integers at most $5$ to ensure that the input and folded tensors have similar numbers of entries.
For example, for the \texttt{PEMS-SF} dataset (a 3-order tensor of size $963\times 144 \times 440$), the assigned values in the form of a $d$ by $d'$ matrix are
\begin{equation*}
        \left[
            \begin{array}{*{20}c}
                2 & 2 & 2 & 2 & 2 & 2 & 2 & 2 & 2 & 2 \\
                2 & 2 & 2 & 2 & 2 & 5 & 1 & 1 & 1 & 1 \\
                2 & 2 & 2 & 2 & 2 & 2 & 2 & 2 & 2 & 1
            \end{array}
        \right],
\end{equation*} 
which result in a $10$-order tensor of size $8 \times 8 \times 8 \times 8 \times 8 \times 20 \times 4 \times 4 \times 4 \times 2$.
Note that $\prod_{l=1}^{d'}n_{1,l}=1024$, $\prod_{l=1}^{d'}n_{2,l}=160$, $\prod_{l=1}^{d'}n_{3,l}=512$ are close to $963$, $144$, and $440$, respectively.
See Section \romanum{4} of~\cite{appendix} for the values used for other datasets.

%\red{[KJ: Should we describe how specifically we can find such folding?]}

\smallsection{Space complexity analysis}
We analyze the effect of folding on the number of parameters (i.e., size of compressed output) in $\model$.
For simplicity, we assume $n_{k,l}=\sqrt[d']{N_k}$ for all  $k\in [d]$ and $l\in[d']$. 
According to Theorem~\ref{thm:space:nttd}, the number of parameters of \nttd of the original $\tensor{X}$ is
\begin{equation*}
O\Big(\sum_{k=1}^{d} \prod_{l=1}^{d'} n_{k,l} \Big) =O(N_1 + \cdots + N_d),
\end{equation*}
if we treat $h$ and $R$ as constants.

%On the other hand, the number of parameters of \nttd of the folded tensor $\fold{\tensor{X}}$ is
The number of parameters of \nttd of $\fold{\tensor{X}}$ is
\begin{equation*}
O\Big(\sum_{l=1}^{d'}\prod_{k=1}^d n_{k,l} \Big) = O\Big(d'\!\!\sqrt[d']{N_1 \cdots N_d} \Big),    
\end{equation*}
which is significantly  smaller than $O(N_1 + \cdots + N_d)$ in \nttd of the original tensor since
%as $\!\!\sqrt[d']{\prod_{k=1}^d{N_k}}$ is always smaller than $\sum_{k=1}^d N_k$. Note that $\!\!\sqrt[d']{\prod_{k=1}^d N_k} < \!\!\sqrt[d]{\prod_{k=1}^d N_k} < N_{max} < \sum_{k=1}^d N_k$ where $N_{max}$ is the maximum mode size of $\tensor{X}$.
%$O\left(d'\cdot\!\!\sqrt[d']{\prod_{k=1}^d{N_k}}\right) \in o(N_{\text{max}})$, where $N_{\text{max}}$ is the maximum mode length in the original tensor $\tensor{X}$.
$O\left(d'\cdot\!\!\sqrt[d']{\prod_{k=1}^d{N_k}}\right) \in o(N_{\text{max}})$, where $N_{\text{max}}$ is the maximum mode length in $\tensor{X}$.
This is because $d'=O(\log N_{\text{max}})$, $d'>d$, and thus $O\left(d'\cdot\!\!\sqrt[d']{\prod_{k=1}^d{N_k}}\right) = O(N_{\text{max}}^e  \log N_{\text{max}})$ for some $e < 1$.
%We provide empirical comparisons for the real-world tensors in Table~\ref{tab:datasets}.
If we consider $R$ and $h$, according to Theorem~\ref{thm:space:nttd}, the space complexity becomes 
\begin{equation} \label{eq:space:folding}
O\Big(h\Big(h+R^2+d'\!\!\sqrt[d']{N_1 \cdots N_d} 
\Big)\Big).
\end{equation}

\subsection{Reordering Technique for Better Fitting a Folded Tensor}
\label{sec:method:reordering}

We present the reordering technique, the last component of \method.
Essentially, we reorder the mode indices in the input tensor before folding so that entries with similar values are placed close to each other by sharing mode indices in the folded tensor.
This arrangement improves the ability of our \nttd model $\model$, to fit the folded tensor more effectively, as $\model$ generates TT cores based on mode indices of target entries that serve as input for the model.

%shares as many TT cores as their indices overlap, and this is beneficial to accurately approximate nearby entries having similar values.

In the example in Figure~\ref{fig:folding}, the closer two entries are located in the original tensor, the more indices they tend to share in the folded tensor.
Specifically, the entries in the \textbf{\textcolor{black}{black}} region share only the first mode index in the folded tensor.
The adjacent entries in the \textbf{\textcolor{my_sky}{blue}} region share both the first and second mode indices  in the folded tensor.
It is important to note that in our model $\model$, the $k$-th \ttc $\mathbf{T}_k$ in Eq.~\eqref{eqn:approx} for approximating two entries are the same if their first $k$ indices are the same. Consequently, two TT cores are shared for the entries in the \textbf{\textcolor{my_sky}{blue}} region.
Therefore, the closer two entries are located in the original tensor, the more inputs and \ttc of $\model$ are likely to share for these entries. Due to this property, positioning similar entries close to each other helps $\model$ easily approximate entries more accurately.
We achieve such re-locations by reordering mode indices in the input tensor, as demonstrated in the example in Figure~\ref{fig:folding}, where the \textbf{\textcolor{my_sky}{blue}} region comprises more similar entries after reordering.

In \method, the mode-index reordering is accomplished by learning reordering functions $\order$. As outlined in Section~\ref{sec:method:overview}, we alternately update the model $\model$ and $\order$. In the following, we describe the initialization and update procedures for $\order$. It is worth noting that mode-index ordering by \method is associated with increasing smoothness, which is discussed in Section~\ref{sec:intro}, while many other compression methods presume that the input tensor is already highly smooth.
% \red{Moreover, as demonstrated in Section~XX, \method achieves better compression results compared to its competitors, even when the reordering technique is applied to those competitors.}

%In Figure~\ref{fig:folding}, the \textcolor{my_sky}{blue} region of which entries share two TT cores contains more similar entries after the indices of the first mode of the matrix are reordered. %so that the values are continuously changed along the axis.

%For reordering the tensor, we initialize the orders $\order$ (Section~\ref{sec:method:train:initorder}) and update it at each epoch (Section~\ref{sec:method:train:order}). %as described in Algorithm~\ref{algo:overview}.

%\subsection{Training Process} \label{sec:method:train}

\smallsection{Initializing the orders} \label{sec:method:train:initorder}
\begin{comment}
Let's set the reference point to \textcolor{my_sky}{(0,0)} of the matrix in Figure~\ref{fig:folding}, which is mapped to the point \textcolor{my_sky}{(0,0,0)} of the tensor.
Three points \textcolor{my_blue}{(1, 1)}, \textcolor{my_green}{(2, 5)}, and \textcolor{my_yellow}{(6, 7)} of the matrix are mapped to the points \textcolor{my_blue}{(0, 0, 3)}, \textcolor{my_green}{(1,2,0)}, and \textcolor{my_yellow}{(3,3,1)} of the tensor, respectively.
The closest point \textcolor{my_blue}{(1, 1)} shares two indices and the second closest point \textcolor{my_green}{(2, 5)} shares an index with \textcolor{my_sky}{(0,0)} in the tensor.
The farthest point \textcolor{my_yellow}{(6, 7)} doesn't share any index with \textcolor{my_sky}{(0,0)} in the tensor.
To sum up, the closer the locations in the original tensor, the more indices they share in the folded tensor.
\end{comment}
%(line~\ref{algo:overview:init} in Algorithm~\ref{algo:overview})
We initialize the set $\order$ of reordering functions
using a surrogate loss function.
For all $k$, reordering the mode-$k$ indices (i.e., optimizing $\pi_k$) is formulated as:
\begin{equation} \label{eqn:sur_prob}
    \vspace{-2mm}
    \min_{\pi_k} \sum_{i=1}^{N_k-1} \left(\fnorm{\slice{k}{\pi_k(i)} - \slice{k}{\pi_k(i+1)}}\right), 
\end{equation}  
where $\slice{k}{i}$ is the $i$-th slice of $\tensor{X}$ along the $k$-th mode. %(see their definitions in Section~\ref{sec:prelim:concept:slice}). 
Note that minimizing Eq.~\eqref{eqn:sur_prob} makes adjacent slices similar. %and thus coincides with our goal.
%We solve the problem for each mode of the tensor and reorder it.

The problem in Eq.~\eqref{eqn:sur_prob} can be reduced to Metric Travelling Salesman Problem~\cite{kao2008encyclopedia}.
Suppose each node represents a slice of the tensor, and each pair of nodes forms an edge with a weight equal to the Frobenius norm of the difference between their slices. 
Then, the optimal solution of the TSP problem on the resulting complete graph can be used to minimize Eq.~\eqref{eqn:sur_prob}.
However, since computing it is NP-hard, we instead obtain a 2-approximation solution based on the fact that the Frobenius norm satisfies the triangle inequality.  
Then, we delete the edge with the largest weight from the obtained solution, which is a TSP cycle, and set each $i$-th node in the resulting path to $\pi_k(i)$. 
Refer to Section \romanum{2} of \cite{appendix} for details on the TSP problem and the 2-approximation algorithm, including pseudocode.

%We check the accuracy gain earned from initializing the orders of the input tensor in the ablation study of Section~\ref{sec:exp:ablation}.
%(i.e., the Hamiltonian cycle with the minimum length)
%
%Each node is a slice of the tensor and the weight of each edge is the Frobenius norm of the difference of the slices corresponding to the nodes composing the edge.
%The TSP problem is to find the Hamiltonian cycle with the minimum length.
%An approximate solution to our problem can be earned by deleting the edge with the largest weight from the solution of the TSP problem.
%The $i$-th node in the path becomes an initial value of $\order_k(i)$.
%Since the weight function is the Euclidean distance between two vectorized slices and the Euclidean distance satisfies the triangle inequality~\cite{curriero2006use}, we use an approximate solution of Metric TSP to solve our problem.
%The total process is described in Algorithm~\ref{algo:initorder}.

\smallsection{Updating the orders based on $\model$ (Algorithm~\ref{algo:update:order})} \label{sec:method:train:order}
As outlined in Section~\ref{sec:method:overview},
after updating the \nttd model $\model$, we update the set $\order$ of reordering functions based on updated $\model$ and the loss function in Problem~\ref{prob:main}. This update step is described in Algorithm~\ref{algo:update:order}.
As defined in Section~\ref{sec:prelim:concepts}, we use $\reorder{\tensor{X}}$ to denote the tensor reordered from $\tensor{X}$ by $\order$. 
For each $k$-th mode, we consider $\floor{N_k/2}$ disjoint candidate pairs of mode-$k$ indices (lines 17-18). The process of obtaining candidate pairs is described in the following paragraph.
For each pair $(i, i')$ of mode indices, we consider the corresponding slices $\cslice{\reorder{\tensor{X}}}{k}{i}$ and $\cslice{\reorder{\tensor{X}}}{k}{i'}$. 
%If swapping them reduces the loss function in Problem~\ref{prob:main}, we update $\pi_k$ by swapping the values of $\pi_k(i)$ and $\pi_k(i')$ (lines~22-24). 
If swapping them reduces the loss function in Problem~\ref{prob:main}, we swap the values of $\pi_k(i)$ and $\pi_k(i')$ (lines~22-24).
Note that, since pairs are disjoint,
we can compute the changes in the loss and update $\pi_k$ in parallel using GPUs.

In the above process, each pair is composed so that swapping them tends to make similar slices located nearby in $\reorder{\tensor{X}}$.
We find such pairs using locality-sensitivity hashing (LSH) for Euclidean distance~\cite{leskovec2020mining}.
We sample half of the indices in each mode and vectorize the corresponding slices as points in a high-dimensional space.
We project the vectorized slices onto a random vector (lines~6-10) and evenly divide the projected points to create buckets.
We then repeatedly select two points in the same bucket.
Assuming that corresponding mode indices are $i_1$ and $i_2$, two pairs $(i_1, i_2 \oplus 1)$ and $(i_1 \oplus 1, i_2)$ are added as candidate pairs where $\oplus$ denotes the XOR operation (lines~17-18).
This approach aims to locate indices corresponding to similar slices nearby.
The remaining mode indices are paired randomly (lines~19-21).
%The length assigned to a bucket is controlled so that the average number of entries in a bucket becomes 4.
%The pseudocode for updating the orders is in Algorithm~\ref{algo:update:order}.
%We use binary trees to compress the orders $\pi_1, \cdots, \pi_d$, and details are described in Theorem~\ref{thm:space:method}.
The effectiveness of the reordering process is experimentally validated in Section~\ref{sec:exp:ablation}.

\begin{algorithm}[t]
\small
    \caption{Update of the reordering functions $\order$} \label{algo:update:order}
    \DontPrintSemicolon
    \SetKwInput{KwInput}{Input}
    \SetKwInput{KwOutput}{Output}
    \SetKwFunction{proc}{AddPairs}
    \SetKw{KwBy}{by}
    \KwInput{(a) a tensor $\tensor{X} \in \mathbb{R}^{\shape}$, (b) a \nttd model $\model$, \\ (c) orders $\pi_1, \cdots, \pi_d$}
    \KwOutput{updated orders $\pi_1, \cdots, \pi_d$}
    
    \For{$k$ $\leftarrow$ $1$ \KwTo $d$}{
        \tcp{\small Project each slice}
        $I \leftarrow \emptyset$ \\
        \For{$j \leftarrow 0$ \KwTo $N_k - 1$ \KwBy $2$}{
            $u \sim U(0, 1)$ \\
            \textbf{if} $u$ $<$ $1/2$ \textbf{then} $I$ $\leftarrow$ $I \cup \{j\}$ \textbf{else} $I$ $\leftarrow$ $I \cup \{j + 1\}$ \\
        }
        $s \leftarrow (\prod_{j=1}^d N_j)/N_k$; $P \leftarrow \emptyset$  \label{algo:update:order:project:start}\\
        $r \leftarrow$ a random point from $\mathbb{R}^{s}$ \\  
        \ForEach{$j \in I$}{
            $v \leftarrow$ vec$(\slice{k}{j})$ \\
            $P[j] \leftarrow r \cdot v / (\fnorm{r}\fnorm{v})$   \label{algo:update:order:project:end}
        }
        \tcp{\small Hash points to buckets}
        $num\_buckets \leftarrow \floor{N_k / 8}$ \\
        $bs \leftarrow \floor{(\max (P)- \min (P)) / num\_buckets}$; $B \leftarrow \emptyset$ \label{algo:update:order:bucket:start}\\              
        \ForEach{$j \in I$}{
            %$u \sim U(0, 1)$ \\
            %\textbf{if} $u$ $<$ $1/2$ \textbf{then} $l$ $\leftarrow$ $j$ \textbf{else} $l$ $\leftarrow$ $j + 1$ \\
            $bi \leftarrow \floor{(P[j]-\min(P))/bs}$ \\
            $B[bi] \leftarrow (B[bi] \cup \{j\}$) \label{algo:update:order:bucket:end}            
        }    
        \tcp{\small Build pairs (run in parallel)}
        $S \leftarrow \emptyset$; $S' \leftarrow \emptyset$ \\
        \ForEach{$b \in B$\label{algo:update:order:pair:start}} 
        {
            $\proc(b, S, True)$  \label{algo:update:order:pair:end}\Comment*[f]{\small Defined below}  \\
            $S' \leftarrow S' \cup b$  \label{algo:update:order:pair:random:start}
        }
        $S' \leftarrow S' \cup \{j \oplus 1 : j \in S'\}$ \\
        $\proc(S', S, False)$ \label{algo:update:order:pair:random:end} \\
        \tcp{\small Update orders (run in parallel)}
        \ForEach{$(i, i') \in S$\label{algo:update:order:start}}{
            $\Delta \leftarrow$ change in the loss when $\pi_k(i)$, $\pi_k(i')$ are exchanged   \\%\Comment*[f]{\small Use mini-batches}\\
            \lIf{$\Delta < 0$}{
                $\pi_k(i), \pi_k(i') \leftarrow \pi_k(i'), \pi_k(i)$\label{algo:update:order:end}
            }
        }

    }
    \Return $\pi_1, \cdots, \pi_d$

    \SetKwProg{myproc}{Function}{}{}
    \myproc{$\proc{C, S, xor}$}{
      \While{$|C| > 1$}{
        Randomly sample $(i_1, i_2)$ from $C$ \\
        \lIf{$xor$}{$S \leftarrow S \cup \{(i_1, i_2 \oplus 1)\} \cup \{(i_1 \oplus 1, i_2)\}$}
        \lElse{$S \leftarrow S \cup \{(i_1, i_2)\}$}
        $C \leftarrow C \setminus \{i_1, i_2\}$
      }
    }
\end{algorithm}
\vspace{-1mm}

%Orders of all modes are updated once per epoch.
%We match $\floor{N_k/2}$ pairs of indices of $\reorder{\tensor{X}}$ which do not overlap each other when reordering the indices of the $k$-th mode.
%The slices (e.g. $\cslice{\reorder{\tensor{X}}}{k}{i}$ and $\cslice{\reorder{\tensor{X}}}{k}{j}$) correspond to the indices (e.g. $(i,j)$) in each pair are swapped if the resulting order reduces the loss function in Problem~\ref{prob:main}.
%The values of $\pi_k(i)$ and $\pi_k(j)$ are swapped in real.

% \subsection{Details of Training Process}
% \label{sec:method:train:model}
% %\subsubsection{Update the model's parameters} 

%This is because the model needs time to exploit initialized orders explained in Section~\ref{sec:method:train:initorder} and Algorithm~\ref{algo:initorder}.
%We use the loss function in Problem~\ref{prob:main} and apply gradient descents to optimize the parameters.

\subsection{Theoretical Analysis} \label{sec:method:theorem}
%We provide a theoretical analysis of (a) the size of the compressed outputs produced by \method, (b) the speed of reconstructing each entry from the outputs, and (c) the speed of compression by \method.
We theoretically analyze \method's compressed-output size, entry-reconstruction speed, and compression speed.
For simplicity,
%We assume that (a) mode sizes of $\tensor{X}$ are the power of 2 (i.e., $n_{l,k} \in \{1, 2\}$ for all $l \in \{1, \cdots, d\}$ and $k \in \{1, \cdots, d'\}$ in Section~\ref{sec:method:folding}), and (b) the hidden dimension $h$ of LSTM and the rank $R$ of TT-cores are constant. 
%The maximum mode size of $\tensor{X}$ is denoted by $N_{max}$.
we assume that all mode sizes of the input tensor $\tensor{X} \in \mathbb{R}^{N_1 \times \cdots \times N_d}$ are powers of 2 (i.e., $n_{l,k} \in \{1, 2\}$ for all $l \in \{1, \cdots, d\}$ and $k \in \{1, \cdots, d'\}$ in Section~\ref{sec:method:folding}).
We use $N_{\text{max}}$ to denote the maximum size of modes in $\tensor{X}$, and use $h$ and $R$ to denote the hidden dimension and TT rank of our model.
%\blue{Refer to Section \romanum{2} of \cite{appendix} for proofs of the lemmas and theorems below.}
%Theorems~\ref{thm:space:method}, \ref{thm:time:entry}, and \ref{thm:space:train}, and Lemmas~\ref{lemma:time:init} and \ref{lemma:time:epoch}.} 

\smallsection{Size of compressed outputs}
We present the space complexity of the  outputs produced by \method in Theorem~\ref{thm:space:method}.
It is important to note that the complexity is much lower than $O(\prod_{i=1}^{d} N_i)$ of the original tensor and can also be lower than $O(R^2\sum_{i=1}^d N_i)$ of \ttd and $O(R\sum_{i=1}^d N_i)$ of CP Decomposition (CPD),
especially for large values of $R$.

\begin{theorem}[Size of Compressed Outputs] \label{thm:space:method}
The size of the compressed output $D = (\model, \order)$ produced by \method (i.e., Algorithm~\ref{algo:overview}) is $O(h(2^d + h + R^2) + \sum_{i=1}^{d} N_i \log N_i)$. %$O(2^d + \sum_{i=1}^{d} N_i \log N_i)$.
\end{theorem}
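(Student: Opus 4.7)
The plan is to decompose the output $D = (\theta, \pi)$ into its two natural components and bound each separately. The model $\theta$ produces the $h(2^d + h + R^2)$ term via Theorem~\ref{thm:space:nttd} applied to the folded tensor, and the set $\order$ of reordering functions produces the $\sum_i N_i \log N_i$ term via the cost of encoding permutations.

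For the model part, I would apply Theorem~\ref{thm:space:nttd} to the folded tensor $\fold{\tensor{X}}$, which has order $d'$ and mode sizes $\prod_{k=1}^{d} n_{k,l}$ for $l \in \{1,\dots,d'\}$. By the power-of-two assumption ($n_{k,l}\in\{1,2\}$), every mode size of $\fold{\tensor{X}}$ is at most $2^d$. The LSTM and linear layers contribute $O(h^2 + hR^2)$ as before, independent of the folded tensor's shape. The only subtlety is the embedding layers. Naively, they would contribute $h \sum_{l=1}^{d'} \prod_k n_{k,l}$, which could be as large as $h\, d'\, 2^d$; however, the footnote in Section~\ref{sec:method:model} states that embeddings are shared across modes of equal length. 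Since each folded mode size is a power of two between $2^0$ and $2^d$, there are at most $d+1$ distinct sizes, and the shared embedding contribution is bounded by $h \sum_{j=0}^{d} 2^j = O(h\, 2^d)$. Combining these yields $O(h(2^d + h + R^2))$ for $\theta$.

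For the reordering part, each $\pi_k$ is a bijection on $[N_k]$, so it can be encoded by storing the image of every index using $\lceil \log_2 N_k \rceil$ bits per image, for a total of $O(N_k \log N_k)$ bits. Summing over $k \in \{1,\dots,d\}$ gives $O\bigl(\sum_{i=1}^{d} N_i \log N_i\bigr)$ for $\order$. Adding the two contributions establishes the stated bound.

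The main (minor) obstacle is being careful that the embedding-sharing argument genuinely reduces the naive $d' \cdot 2^d$ bound to $2^d$; this depends on (a) the power-of-two assumption forcing only $O(d)$ distinct folded mode lengths, and (b) the sharing being applied across all modes of equal length rather than merely contiguous ones. Once that is noted, the rest is a direct substitution into Theorem~\ref{thm:space:nttd} together with the standard cost of storing $d$ permutations.
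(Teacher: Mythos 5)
Your proof is correct and follows essentially the same route as the paper: bound the \nttd parameters via Theorem~\ref{thm:space:nttd} applied to the folded tensor (embedding $O(h2^d)$ by sharing across equal-length modes, LSTM and linear layers $O(h^2+hR^2)$) and encode each permutation $\pi_k$ in $O(N_k\log N_k)$ bits. Your treatment of the embeddings is in fact slightly more careful than the paper's, which only invokes the largest folded mode size $2^d$, whereas you make explicit that the at most $d+1$ distinct power-of-two mode lengths sum to $O(2^d)$ via the geometric series.
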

\begin{proof}
The embedding layer of $\model$ has $O(h2^d)$ parameters because our model shares the embedding layers across different modes in $\fold{\reorder{X}}$ and the largest mode size of $\fold{\reorder{X}}$ is $2^d$, as detailed in Section~\ref{sec:method:folding}.
%The embedding layer of $\model$ requires $O(2^d)$ parameters because our model shares the embedding layers across different modes in $\fold{\reorder{X}}$ and the largest mode size of $\fold{\reorder{X}}$ is $2^d$ (as given in Section~\ref{sec:method:folding}).
% The numbers of parameters of LSTM and fully connected layers are $O(1)$.
The numbers of parameters of LSTM and fully connected layers are $O(h^2 + hR^2)$ since the number of parameters of each linear layer is proportional to the product of the input dimension and the output dimension.
For each mode $i$, the number of all possible orderings of $\pi_i$ is $N_i!$.
Thus, we need $O(\log N_i!) \in O(N_i \log N_i)$ bits to save one among them. 
Hence, the total size of the compressed output is $O(h(2^d + h + R^2) + \sum_{i=1}^{d} N_i \log N_i)$. %$O(2^d + \sum_{i=1}^{d} N_i \log N_i)$.
\end{proof}

\smallsection{Speed of reconstruction}
Another important aspect of a compression algorithm is the speed of reconstruction.
Theorem~\ref{thm:time:entry} formalizes the reconstruction speed for the output of \method. While the complexity is higher than $O(dR^2)$ of \ttd or $O(dR)$ of CPD, it is only logarithmic in mode lengths.

\begin{theorem}[Reconstruction Speed] \label{thm:time:entry}
Given the output $D=(\model, \order)$ of \method (i.e., Algorithm~\ref{algo:overview}),
approximating the value of an input tensor entry (i.e., Algorithm~\ref{algo:approx} on $\reorder{\tensor{X}}$)  takes $O((d + h^2 + hR^2)\log N_{\textnormal{max}})$ time.
% \method requires $O(d\log N_{max})$ time to approximate each entry.
\end{theorem}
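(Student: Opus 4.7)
My plan is to trace every operation performed when \method{} reconstructs a single entry of $\tensor{X}$ and sum their costs, relying on the fact that the folded tensor $\fold{\reorder{\tensor{X}}}$ has order $d' = O(\log N_{\max})$ under the assumption $n_{l,k}\in\{1,2\}$. Reconstruction of $\tensor{X}(i_1,\dots,i_d)$ proceeds in three stages: (i) apply the stored reordering $\order$ to obtain $(\pi_1(i_1),\dots,\pi_d(i_d))$; (ii) convert this reordered index to the $d'$ folded mode indices via Eq.~\eqref{eqn:idx:folded}; (iii) run Algorithm~\ref{algo:approx} on the folded index to produce the approximation.

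First I would bound stages (i) and (ii). Each $\pi_k$ is stored as an array, so a single lookup is $O(1)$ and stage (i) costs $O(d)$. For stage (ii), each of the $d'$ folded indices is a sum of $d$ terms with precomputable base products, yielding $O(d\,d') = O(d\log N_{\max})$ time. These are dominated by the $d\log N_{\max}$ term in the claimed bound.

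Next I would analyze Algorithm~\ref{algo:approx} term by term with $d$ replaced by $d'=O(\log N_{\max})$: the $d'$ embedding lookups cost $O(h\log N_{\max})$ (line~\ref{algo:approx:emb}); a single forward pass of the LSTM over $d'$ steps costs $O(h^2\log N_{\max})$ since each cell performs a constant number of $h\times h$ matrix--vector products (line~\ref{algo:approx:lstm}); producing the TT cores through the linear layers costs $O(hR^2\log N_{\max})$, because each of the $d'$ cores requires multiplying an $h$-vector by a weight tensor of size $\Theta(hR^2)$ (lines~\ref{algo:approx:linear1}--\ref{algo:approx:linear3}); finally, the chain product $\mat{T}_1\mat{T}_2\cdots\mat{T}_{d'}$ is evaluated left-to-right, so one maintains a $1\times R$ running vector and each of the $d'-1$ multiplications costs $O(R^2)$, totalling $O(R^2\log N_{\max})$ (line~\ref{algo:approx:return}).

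Summing the five contributions gives
\[
O\bigl((d + h + h^2 + hR^2 + R^2)\log N_{\max}\bigr) = O\bigl((d + h^2 + hR^2)\log N_{\max}\bigr),
\]
where the last simplification absorbs $h$ into $h^2$ and $R^2$ into $hR^2$. The only real subtlety is verifying $d'=O(\log N_{\max})$: since each $n_{l,k}\in\{1,2\}$ and $\prod_{k=1}^{d'}n_{l,k}=N_l\le N_{\max}$, at most $\log_2 N_{\max}$ of the $n_{l,k}$ are equal to $2$ for each $l$, and $d'$ can be chosen to equal $\lceil\log_2 N_{\max}\rceil$. Everything else is routine bookkeeping, so the main care point is making sure the LSTM-cell cost is $O(h^2)$ (not $O(h)$) and the TT-core generation cost is $O(hR^2)$ (not $O(h+R^2)$), which follows directly from the parameter shapes declared in lines~\ref{algo:approx:linear1}--\ref{algo:approx:linear3} of Algorithm~\ref{algo:approx}.
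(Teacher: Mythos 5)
Your proposal is correct and follows essentially the same decomposition as the paper's proof: $O(d)$ for the reordering lookup, $O(d\log N_{\max})$ for computing the folded indices, $O(h^2\log N_{\max})$ for the embedding and LSTM, $O(hR^2\log N_{\max})$ for the linear layers, and $O(R^2\log N_{\max})$ for the left-to-right chain product (the paper's ``order of computations is optimized''). Your explicit verification that $d'=O(\log N_{\max})$ is a small, welcome addition but does not change the argument.
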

\begin{proof}
For each entry of $\tensor{X}$, earning the mode indices in $\reorder{\tensor{X}}$ requires $O(d)$ time. 
Computing the mode indices in $\fold{\reorder{\tensor{X}}}$ requires $O(d\log N_{\text{max}})$ time because all $i_{kl}$ in  Eq.~\eqref{eqn:idx:folded} must be computed where $k$ ranges from 1 to $d$ and $l$ ranges from 1 to $d'=O(\log N_{\text{max}})$.
%since we assume that $h$ and $R$ are constant.
%Processing the inputs through the embedding layer and the LSTM in $\model$ requires $O(\log N_{max})$ time.
Processing the inputs through the embedding layer and LSTM in $\model$ requires $O(h^2 \log N_{\text{max}})$ time.
The time complexity of computing TT cores with fully connected layers is $O(hR^2\log N_{\text{max}})$, and that of computing products of TT cores is $O(R^2 \log N_{\text{max}})$, provided that the order of computations is optimized.
% The time complexity to compute TT cores with fully connected layers requires $O(\log N_{max})$ time for the same reason. 
Therefore, the total time complexity of approximating each entry is $O((d + h^2 + hR^2)\log N_{\text{max}})$.
% Therefore, the overall time for approximating each entry is $O(d\log N_{max})$.
\end{proof}

\smallsection{Speed of compression}
We analyze the speed of the compression process of \method.

\begin{theorem}[Compression Speed] \label{thm:time:total}
The time complexity of \method (i.e., Algorithm~\ref{algo:overview}) with $T$ update steps is $O((Td(d + h^2 + hR^2)\log N_{\textnormal{max}} + \sum_{i=1}^{d}N_i)\prod_{i=1}^{d} N_i)$, where $\prod_{i=1}^{d} N_i$ is the number of tensor entries.
\end{theorem}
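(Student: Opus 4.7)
The plan is to decompose the total running time of Algorithm~\ref{algo:overview} into (i) the one-time initialization of $\model$ and $\order$, and (ii) the $T$ iterations of the main loop, and bound each separately so that the sum telescopes into the claimed expression. Since Theorem~\ref{thm:time:entry} already gives the cost $O((d+h^2+hR^2)\log N_{\text{max}})$ of evaluating $\model$ on a single entry of the reordered/folded tensor, I will repeatedly invoke it as a unit-cost primitive and only count how many such evaluations each subroutine triggers.

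First I would handle initialization. Folding and index-remapping are $O(\prod_i N_i)$ time. The non-trivial part is computing the initial reordering $\order$ via the 2-approximate metric TSP for each mode described in Section~\ref{sec:method:train:initorder}. For mode $k$, forming the complete graph requires the $\binom{N_k}{2}$ Frobenius distances between slices $\slice{k}{\cdot}$, each of size $(\prod_j N_j)/N_k$; this costs $O(N_k^2 \cdot \prod_j N_j / N_k) = O(N_k \prod_j N_j)$, and the subsequent MST/DFS-based 2-approximation is only $O(N_k^2)$, which is dominated. Summing over $k$ gives $O\bigl((\sum_i N_i)\prod_i N_i\bigr)$, which is exactly the non-$T$ summand in the statement.

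Next I would bound one pass of the \textbf{while} loop. The reorder-and-fold step and the fitness computation are each $O(\prod_i N_i)$. Updating $\model$ (line~\ref{line:update:model}) runs a forward/backward pass over the tensor, which by Theorem~\ref{thm:time:entry} costs $O((d+h^2+hR^2)\log N_{\text{max}} \cdot \prod_i N_i)$. For updating $\order$ (Algorithm~\ref{algo:update:order}), I would analyze mode $k$ as follows: the projection block (lines~\ref{algo:update:order:project:start}--\ref{algo:update:order:project:end}) touches $O(N_k/2)$ slices of size $\prod_j N_j / N_k$, giving $O(\prod_j N_j)$; hashing (lines~\ref{algo:update:order:bucket:start}--\ref{algo:update:order:bucket:end}) and pair construction (lines~\ref{algo:update:order:pair:start}--\ref{algo:update:order:pair:random:end}) are $O(N_k)$; and the $O(N_k)$ candidate swaps (lines~\ref{algo:update:order:start}--\ref{algo:update:order:end}) each require recomputing $\model$ on two slices of $\prod_j N_j / N_k$ entries, yielding $O((d+h^2+hR^2)\log N_{\text{max}} \cdot \prod_j N_j)$ per mode. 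Summing over the $d$ modes gives $O(d(d+h^2+hR^2)\log N_{\text{max}} \cdot \prod_i N_i)$, which dominates the other per-iteration costs.

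Multiplying by $T$ iterations and adding the initialization bound yields
\[
O\!\left(\bigl(\sum\nolimits_i N_i\bigr)\prod\nolimits_i N_i + Td(d+h^2+hR^2)\log N_{\text{max}} \prod\nolimits_i N_i\right),
\]
which factors into the claimed form. The main obstacle I anticipate is correctly accounting for the swap-evaluation cost inside Algorithm~\ref{algo:update:order}: one must argue that the candidate pairs produced by the LSH stage are disjoint (so that $O(N_k)$ swaps suffice per mode) and that each $\Delta$ in line~22 is computed by re-approximating only the two affected slices rather than the whole tensor; once this is pinned down, the rest of the analysis is a straightforward aggregation of per-entry costs already licensed by Theorems~\ref{thm:space:nttd} and~\ref{thm:time:entry}.
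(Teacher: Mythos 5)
Your proposal is correct and follows essentially the same route as the paper's own (sketched) proof: it bounds the TSP-based order initialization by $O\bigl((\sum_{i=1}^{d}N_i)\prod_{i=1}^{d}N_i\bigr)$ via pairwise slice distances, and bounds one joint update of $\model$ and $\order$ by $O\bigl(d(d+h^2+hR^2)\log N_{\textnormal{max}}\prod_{i=1}^{d}N_i\bigr)$ by charging per-entry evaluation costs from Theorem~\ref{thm:time:entry}, then multiplies by $T$. The only difference is that you spell out the per-mode accounting inside Algorithm~\ref{algo:update:order} (disjoint pairs, slice-local $\Delta$ computation) that the paper defers to its appendix.
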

\noindent \textit{Proof Sketch.} Initializing all reordering functions $\order$ takes $O((\sum_{i=1}^{d}N_i)\cdot(\prod_{i=1}^{d}N_i))$ time.
Updating $\model$ and $\pi$ once (i.e., lines~\ref{line:update:model}-\ref{line:update:order}) takes
$O(d(d + h^2 + hR^2)\prod_{i=1}^{d} N_i \log N_{\textnormal{max}})$ time.
For a full proof, see Section \romanum{1} of \cite{appendix}. \qed
\vspace{0.5mm}

%\begin{proof}
% The theorem follows from Lemmas~\ref{lemma:time:init} and \ref{lemma:time:epoch}.
%\end{proof}

\smallsection{Connection to actual running time}
%We measure the actual running time for reconstruction in Section~\ref{sec:exp:scalability}.
In Sections~\ref{sec:exp:scalability}, we measure the actual running time for compression and reconstruction to confirm the time complexity.
In practice, the term $\prod_{i=1}^{d} N_i$, which corresponds to the number of entries, is much larger and also increases much faster than all other terms, and as thus
the compression time increases near linearly with it.

% Moreover, the time for initializing orders accounts for a very small portion (less than 0.2\%) of the total running time, with the update steps consuming the majority of the time.

\smallsection{Memory requirements}
The complexity of memory space required for compression by \method does not exceed the combined memory requirements for a mini-batch, the compressed output, and the reordering functions.
%as shown in \red{Section \romanum{2} of} \cite{appendix}.}
%The complexity of memory space required for compression by \method  does not exceed the combined memory requirements for the input tensor and the compressed output, as shown in \red{Section \romanum{1} of} \cite{appendix}.
\begin{theorem}[Memory Requirements for Compression] \label{thm:space:train}
\method (Algorithm 1) requires $O(Bd + h(2^d + B(h + R^2) \log N_{\textnormal{max}}) + \sum_{i=1}^d N_i)$ memory space, where $B$ is the number of tensor entries in a mini-batch.
\end{theorem}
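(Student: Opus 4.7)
The plan is to partition the total memory footprint of Algorithm~\ref{algo:overview} into four disjoint components and bound each of them separately: (i) the reordering functions $\order$ that must be kept in working memory throughout training, (ii) the persistent parameters of the \nttd model $\model$, (iii) the raw mini-batch of $B$ tensor entries (mode indices plus target values), and (iv) the intermediate activations that must be cached during the forward pass of $\model$ in order to perform backpropagation. Summing these four terms should yield the claimed bound.

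For (i), storing each reordering function $\pi_i$ as an integer array of length $N_i$ costs $O(N_i)$ memory in working storage (the $\log N_i$ bit-factor from Theorem~\ref{thm:space:method} is about the compressed serialized size, not RAM holding the permutation as word-sized indices), so the total is $O(\sum_{i=1}^{d} N_i)$. For (ii), I would re-use the argument of Theorem~\ref{thm:space:nttd} in the folded setting of Section~\ref{sec:method:folding}: because embeddings are tied across modes of equal length and the largest mode of $\fold{\reorder{\tensor X}}$ has size at most $2^d$, the embedding table costs $O(h \cdot 2^d)$; the LSTM contributes $O(h^2)$ gate weights; and the three linear layers producing TT cores contribute $O(hR^2)$. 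For (iii), each of the $B$ entries is specified by its $d$ original mode indices, giving $O(Bd)$. This already accounts for the $Bd$, the $h\cdot 2^d$, and the $\sum_i N_i$ terms.

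The remaining and most delicate component is (iv). The folded tensor has order $d' = O(\log N_{\max})$, so the recurrent unroll in Algorithm~\ref{algo:approx} runs for $O(\log N_{\max})$ steps. For each of the $B$ samples and each of the $O(\log N_{\max})$ time steps, the autograd tape must retain: the embedding lookup output of size $O(h)$, the LSTM cell and hidden states together with the intermediate pre-activations used by the four gates (all of size $O(h)$, but whose gradient computation traffics through the $h\times h$ gate weight matrices, yielding a tight bound of $O(h^{2})$ per sample per step when one accounts for the buffered matrix–vector workspaces used inside cuDNN-style batched LSTMs), and the $O(R^{2})$-entry TT core produced by the $h \to R^{2}$ linear map whose backward pass additionally requires an $O(hR^{2})$ workspace. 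Aggregating over samples and time steps gives $O\!\bigl(B(h^{2}+hR^{2})\log N_{\max}\bigr) = O\!\bigl(hB(h+R^{2})\log N_{\max}\bigr)$, matching the remaining term of the theorem.

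The main obstacle will be step (iv): justifying the $h^{2}$ rather than $h$ factor in the LSTM activation cache without being either sloppy (claiming all $h\times h$ weights are replicated per sample) or overly optimistic (ignoring the per-step workspace that gradient checkpointing-free autograd actually allocates). I would handle this by stating a clean upper bound argument — namely, that the per-sample, per-step backward buffer is subsumed by the size of the gate pre-activations times the fan-in of the recurrent weight, which is $O(h)\cdot O(h) = O(h^{2})$ — and appealing to the fact that Algorithm~\ref{algo:overview} processes mini-batches in the standard autograd-retaining mode used throughout Section~\ref{sec:method:model}. Adding the four bounds completes the proof.
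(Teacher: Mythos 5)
Your decomposition into the stored permutations $O(\sum_{i=1}^{d} N_i)$, the model parameters $O(h(2^d+h+R^2))$, the mini-batch $O(Bd)$, and the per-sample, per-timestep autograd activations $O(B h(h+R^2)\log N_{\max})$ is exactly the accounting the paper uses (its proof is deferred to the appendix, but the in-source draft follows the same split: stored output plus LSTM/embedding/linear-layer workspace over the $O(\log N_{\max})$-step unroll, scaled by the batch), and the sum correctly reproduces the claimed bound. Your $O(h^2)$-per-step activation figure is a safe over-estimate rather than a tight count, but since the theorem is an upper bound this is harmless, and the rest of the argument is sound.
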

\begin{proof}
    Refer to Section \romanum{1} of \cite{appendix}.
    \vspace{-1mm}
\end{proof}
\begin{comment}
\begin{theorem}[Memory Requirements for Compression] \label{thm:space:train}
\method (Algorithm 1) requires $O(\prod_{i=1}^d N_i + h(2^d + (h + R^2) \log N_{\textnormal{max}}) + \sum_{i=1}^d N_i \log N_i)$ memory space.
\end{theorem}
\begin{proof}
Storing $\tensor{X}$ requires memory of size $O(\prod_{i=1}^{d} N_i)$, and storing the current compressed output requires memory of 
 $O(h(2^d + h + R^2) + \sum_{i=1}^d N_i \log N_i)$ by Theorem~4.2. %\ref{thm:space:method}.
The operations in LSTM, the embedding layer, and the fully connected layer consume $O(h(h + R^2) \log N_{\text{max}})$ memory during the inference and backpropagation of the model if we set the size of min-batches to a constant.
The other parts are dominated by the parts above.
In conclusion, the memory required for compression is $O(\prod_{i=1}^d N_i + h(2^d + (h + R^2) \log N_{\text{max}}) + \sum_{i=1}^d N_i \log N_i)$.
\end{proof}
\end{comment}

\section{Experiments} \label{sec:expr}

We review our experiments for the following questions:
%We performed experiments to answer the following questions:
\begin{enumerate}[leftmargin=6mm]
    \item [Q1.] \textbf{Compression Performance:} 
    How accurately and compactly does \method compress real-world tensors?
    %Does \method outperform other competitors for the lossy tensor compression task?
    \item [Q2.] \textbf{Ablation Study:} 
    How effective is each component of \method for reconstruction accuracy?     
    \item [Q3.] \textbf{Scalability:}
    How does \method's compression and reconstruction time scale with input tensor size?
    %How does the compression and reconstruction time of \method scale with respect to the tensor size?
    \item [Q4.] 
    \textbf{Further Inspection:} Can our method accurately compress high-rank tensors? How does it reorder mode indices?  %How does \method reorder mode indices? Can it accurately compress high-rank tensors?
   \item [Q5.] \textbf{Compression Time:}
    How does the speed of compression by \method compare to that of its competitors?
    \item [Q6.] \textbf{Effect of Hyperparameters (Section~\romanum{7} of \cite{appendix}):} How sensitive is \method to hyperparameter settings?
    % \textbf{Qualitative Case Study:} \red{[KJ: I am here]}
    % What semantic meaning can be found from the orders earned by \method?    
    % \item [Q7.] \textbf{Expressiveness:}
    % Can \method express tensors that cannot be modeled accurately by low-rank tensor decompositions?
\end{enumerate}

\setlength{\tabcolsep}{2.5pt}
\begin{table}[t]
    \vspace{-1mm}
    \centering
    \caption{
    Statistics of public real-world datasets used in the paper.} 
    \label{tab:datasets}
    \scalebox{0.85}{
    \begin{tabular}{c|c|c|c|c|c}
        \toprule
        \multirow{2}{*}{\textbf{Name}} & \multirow{2}{*}{\textbf{Size}} & \textbf{Order} & \textbf{Order} & \multirow{2}{*}{\textbf{Density}} & \multirow{2}{*}{\textbf{Smoothness}} \\
        & & (Original) & (Folded) & \\
        \midrule
        \texttt{Uber} & $183 \times 24 \times 1140$ & \multirow{6}{*}{3}  & 9 & 0.138 & 0.861\\
        \texttt{Air Quality} & $5600 \times 362 \times 6$ &  & 12 & 0.917 & 0.513 \\
        \texttt{Action} & $100 \times 570 \times 567$ &  & 8 & 0.393 & 0.484 \\
        \texttt{PEMS-SF} & $963 \times 144 \times 440$ &  & 10 & 0.999 & 0.461 \\
        \texttt{Activity} & $337 \times 570 \times 320$ & & 8 & 0.569 & 0.553 \\
        \texttt{Stock} & $1,317 \times 88 \times 916$ & & 10 & 0.816 & 0.976 \\        
        \midrule
        
        \texttt{NYC} & \makecell{$265 \times 265 \times 28  \times 35$} & \multirow{2}{*}{4} & 7 & 0.118 & 0.788 \\ 
        \texttt{Absorb} & \makecell{$192 \times 288 \times 30 \times 120$} &  & 7 & 1.000 & 0.935 \\
        \bottomrule
    \end{tabular}
    }
\end{table}

\begin{figure*}[t]
    \vspace{-4mm}
    \begin{minipage}[c][7cm]{0.1\linewidth}
        \centering
        \includegraphics[width=\linewidth]{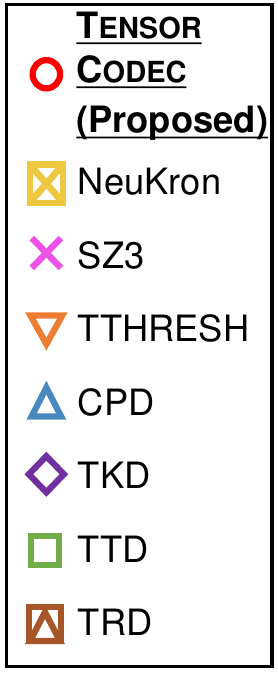} 
    \end{minipage}
    \begin{minipage}[c][7cm][b]{0.9\linewidth} 
        \centering
        \subfigure[\texttt{Stock}]{
            \includegraphics[width=0.24\linewidth]{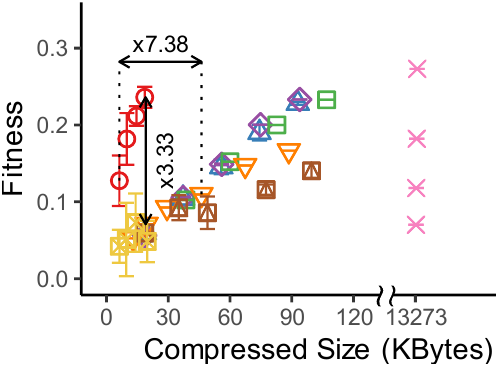}
            \label{fig:trade_off:kstock}
        }
        \hspace{-4mm}
        \subfigure[\texttt{Activity}]{
            \includegraphics[width=0.24\linewidth]{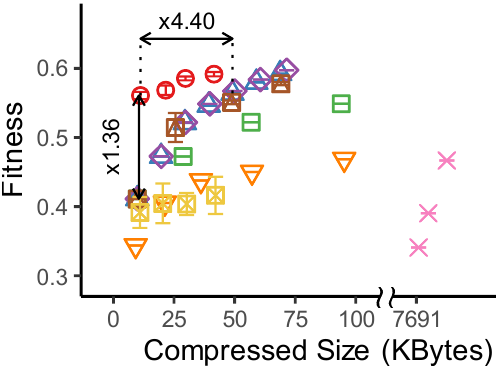}    
        }
        \hspace{-4mm}
        \subfigure[\texttt{Action}]{
            \includegraphics[width=0.24\linewidth]{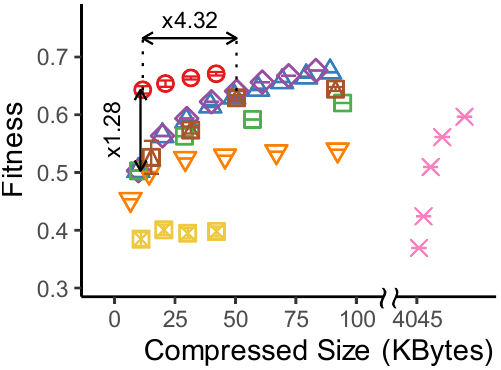}    
        }
        \hspace{-4mm}
        %\hspace{-1.5mm}
        \subfigure[\texttt{Air Quality}]{
            \includegraphics[width=0.24\linewidth]{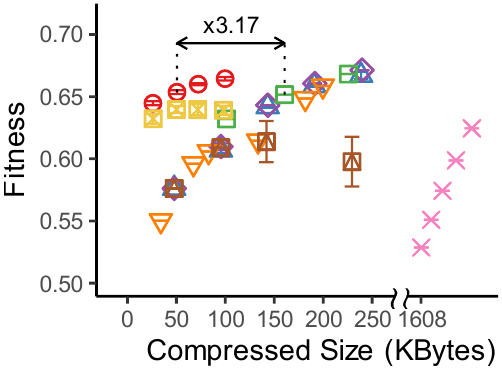}  
        }       
        \subfigure[\texttt{PEMS-SF}]{
            \includegraphics[width=0.24\linewidth]{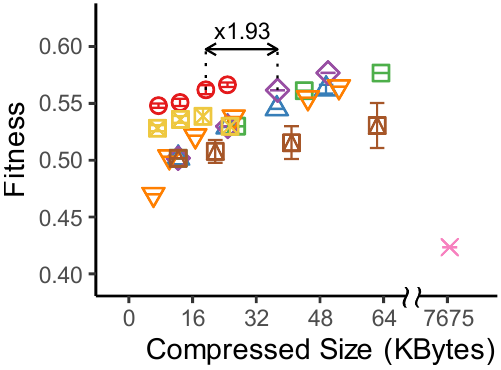}  
        }
        \hspace{-4mm}
        \subfigure[\texttt{Uber}]{
            \includegraphics[width=0.24\linewidth]{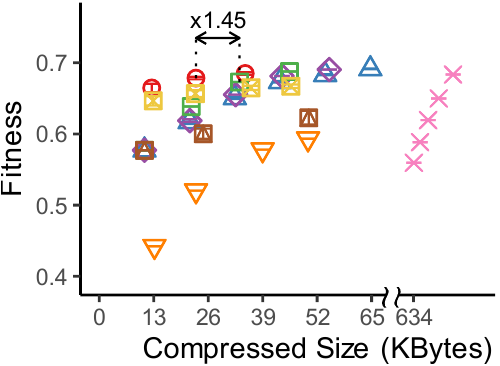}  
        }
        \hspace{-4mm}
        \subfigure[\texttt{Absorb}]{
            \includegraphics[width=0.24\linewidth]{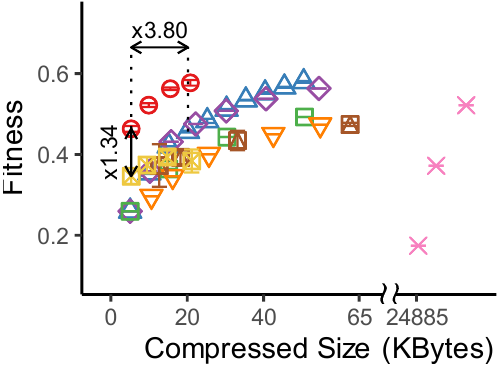}    
        }
        \hspace{-4mm}
        \subfigure[\texttt{NYC}]{
            \includegraphics[width=0.24\linewidth]{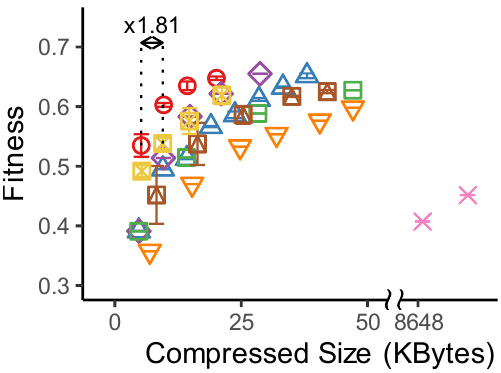}  
        } \\
    \vspace{-2mm}
    \end{minipage} 
    
    \caption{\label{fig:trade_off} \underline{\smash{\method provides compact and accurate tensor compression.}}
    % The compression results of \method are light and can be recovered accurately.
    Its compressed size is up to 7.38$\times$ smaller than that of the best  competitor showing similar fitness.
    Given a budget for compressed size, \method shows up to 3.33$\times$ higher fitness than the best baseline.
    %Note that the plots that are missing \textsc{NeuKron}  are due to timeouts (more than 24 hours).
    }
\end{figure*}

\subsection{Experimental Settings}
\label{sec:exp:settings}

\smallsection{Machine} 
For \method and NeuKron, which require GPUs, we used a machine with 4 RTX 2080Ti GPUs and 128GB RAM.
%\red{For \method and NeuKron, we used a machine with 4 RTX 2080Ti GPUs and 128GB RAM.}
For all other algorithms, we used a machine with an i5-9600K (6 cores) and 64GB RAM. 
%Note that the compressed size and approximation accuracy are independent of machine specifications.
Note that the outputs are independent of machine specifications.

\smallsection{Datasets} 
We used 8 real-valued tensors from real-world datasets to verify the effectiveness of our method. %\method.
% Note that compression of images and videos constitutes a separate and advanced field of study \cite{bhaskaran1997image,ma2019image}, and consequently, our evaluation is primarily focused on tensors that are neither images nor videos.
We provide basic statistics of the tensors in Table~\ref{tab:datasets}. 
%Smoothness is defined as $1-\frac{\mathbb{E}_i[\sigma_3(i)]}{\sigma}$, where $\sigma_3(i)$ denotes the standard deviation of values in the sub-tensor of size $3^d$ centered at the position $i$, and $\sigma$ denotes the standard deviation of the values of all entries in the tensor.
Smoothness is defined as $1-\frac{\mathbb{E}_i[\sigma_3(i)]}{\sigma}$, where $\sigma_3(i)$ denotes the standard deviation of values in the sub-tensor of size $3^d$ centered at the position $i$, and $\sigma$ denotes the standard deviation of the values of the entire tensor.
%In addition, $\sigma$ denotes the standard deviation of the values of all entries in the tensor.
%The last two columns of Table~\ref{tab:datasets} show how many parameters can be reduced after folding tensors when TT ranks are fixed.
\textbf{Note that we used tensors with diverse properties, including varying density and smoothness, from different domains, without any assumption on data property.}
%\textbf{Note that we used tensors with diverse properties, including varying density and smoothness, from different domains, without assuming any particular data property.}
See Section~\romanum{3} of \cite{appendix} for data semantics and pre-processing steps. 
\begin{comment}
\texttt{Uber}\footnote{http://frostt.io/}~\cite{frosttdataset} stores the counts of Uber pickups in New York City, and it is in the form of (date, hour, latitude; count).
\texttt{Air Quality}~\cite{jang2020d} is air pollutants information consisting of (hour, location, pollutant; measurement).
\texttt{Action} and \texttt{Activity}\footnote{https://github.com/titu1994/MLSTM-FCN}~\cite{wang2012mining, karim2019multivariate} are composed of features from motion videos, and they have the form of (frame, feature, video; value).
\texttt{PEMS-SF}\footnote{http://www.timeseriesclassification.com/} ~\cite{cuturi2011fast} is composed of the rates of use of car lanes in San Francisco bay area with the form of (station, timestamp, day; measurement).
\texttt{Stock}~\cite{jang2021fast} is the collection of stocks in the form of (time, feature, stock; value).
\texttt{NYC}\footnote{https://www.nyc.gov/site/tlc/about/tlc-trip-record-data.page} is the trip record of yellow and green taxis in New York City from 2020 to 2022 of which the form is (pick up zone, drop off zone, day, month; count).
\texttt{Absorb}\footnote{https://www.earthsystemgrid.org/} contains aerosol absorption, and its format is (longitude, latitude, altitude, time; measurement).
\end{comment}

\smallsection{Competitors}
We compared \method with 7 state-of-the-art methods (refer to Section~\ref{sec:rel_works}) to evaluate their performance in terms of tensor compression. 
%We briefly described the methods in Section \ref{sec:rel_works}.
We employed four well-known tensor decomposition methods: \textbf{CPD}~\cite{carroll1970analysis}, \textbf{TKD}~\cite{tucker1966some}, \textbf{\ttd}~\cite{oseledets2011tensor}, and \textbf{TRD}~\cite{zhao2019learning} (Section~\ref{sec:rel_works}). 
These methods are implemented in MATLAB R2020a, which offers a highly efficient linear algebra library.\footnote{Tensor Toolbox. https://tensortoolbox.org/.} \footnote{TT-Toolbox. https://github.com/oseledets/TT-Toolbox.} \footnote{https://github.com/oscarmickelin/tensor-ring-decomposition}
%We used the ALS implementations in Tensor Toolbox~\cite{kolda2023matlab} for CPD and TKD.
We used the official C++ implementation of \textbf{TTHRESH}~\cite{ballester2019tthresh} and \textbf{SZ3}~\cite{zhao2021optimizing}, which are for compressing tensors of visual data.
Since the compression results of TTHRESH and SZ3 depend on mode-index orders, we applied our order-initialization technique to these methods only when it was helpful.
We also used the official PyTorch implementation of \textbf{NeuKron}~\cite{kwon2023neukron}, the state-of-the-art deep-learning-based method for compressing sparse tensors.
% which is implemented in Python with PyTorch.
% We run the implementations provided by the authors of TTHRESH, SZ3, and NeuKron.
We implemented \method using PyTorch.
%We used 7 state-of-the-art methods for tensor compressions in order to verify the performance of \method in terms of compression ability and accuracy.
%The methodologies of them are briefly explained in Section \ref{sec:rel_works}. 
%Four well-known tensor decomposition methods are used, \textbf{CP decomposition}~\cite{carroll1970analysis}, \textbf{Tucker decomposition}~\cite{tucker1966some}, \textbf{\ttd}~\cite{oseledets2011tensor}, and \textbf{TR decomposition}~\cite{zhao2016tensor}. 
%Tensor decomposition methods are all implemented in MATLAB (R2020a).
%The implementation of TT-SVD in TT-toolbox~\cite{oseledets2016tt} is used to get the output of \ttd.
%TR-SVD implemented by Mickelin et al~\cite{mickelin2020algorithms} is employed to get TR decomposition.
%\textbf{NeuKron}~\cite{kwon2023neukron}, the state-of-the-art method for compressing sparse tensors based on a neural network, is compared with our method.
%NeuKron is implemented in Python with Pytorch.

\smallsection{Experimental Setup} 
To measure the accuracy of the compression methods, we employed fitness, which is defined as:
\begin{equation*}
    \vspace{-1mm}
    {fitness} = 1-({\fnorm{\tensor{X} - \hat{\tensor{X}}}}/{\fnorm{\tensor{X}}})
\end{equation*}
where $\hat{\tensor{X}}$ is an approximation of $\tensor{X}$ obtained by the evaluated method. 
This metric has been widely used to assess the approximation accuracy of tensor decomposition methods~\cite{battaglino2018practical, perros2018sustain}. 
%\red{This metric is commonly employed to evaluate the accuracy of tensor decomposition methods~\cite{battaglino2018practical, perros2018sustain}.} 
The fitness value is smaller than 1, with higher fitness indicating more accurate approximation.
For comparing compressed sizes,
we used the double-precision floating-point format for all methods.
We encoded the order of $N_k$ indices in each $k$-th mode using $N_k \log_2N_k$ bits, by representing each integer from 0 to $N_k-1$ in $\log_2 N_k$ bits.
% We encode each integer as a leaf of a binary tree when compressing the order of indices generated by \method and a single element of the order requires $O(\log_2 N)$ space where $N$ is the mode length.
% Therefore, saving an order requires $O(N \log N)$ space as mentioned in Theorem~\ref{thm:space:method}.
%The encoding method for orders of our method is also applied to NeuKron, SZ3, and TTHRESH. 
Orders of all mode indices were stored separately in \method and NeuKron (and also in SZ3 and TTHRESH when our reordering scheme was applied to them).
%We applied our order initialization method to SZ3 and TTHRESH only if it decreases the loss in Eq.~\eqref{eqn:sur_prob} more than 10\%, and in such a case, the order of mode indices is additionally stored.
% The orders for all modes are saved in the case of \method and NeuKron.
% We only used the order initialization in Section~\ref{sec:method:train:initorder} when the loss in Eq.~\eqref{eqn:sur_prob} decreases more than 10\%.
% Thus, only a part of the orders is saved for SZ3 and TTHRESH in some datasets. 
%The learning rates are provided in Appendix~ \cite{appendix}.
We conducted each experiment 5 times to obtain the average and standard deviation of running time and fitness.
Refer to Section \romanum{4} of \cite{appendix} for more details.
%\jihoon{We set the maximum number of training epochs to (a) AAA for BBB dataset, (b) CCC for DDD dataset, and (c) EEE for FFF dataset. In addition, we used early-stopping technique for all experiments. Specifically, the experiments were stopped when [TODO: FILL IN THE BLANK]. For \method, we tuned the hyperparameters by setting the rank $r$ of the TT-cores among $\{GG, HH, II\}$ and the hidden dimension $h$ among $\{JJ, KK, LL\}$. To optimize the parameters of $\Theta$, we used Adam \cite{} optimizer with learning rate MM.} 

\subsection{Q1. Compression Performance} \label{sec:exp:tradeoff}
We evaluated the compression performance of \method and its competitors, focusing on the trade-off between compressed size and fitness.
The hyperparameters of the compared methods were configured to yield similar compressed sizes, as detailed in Section~\romanum{4} of \cite{appendix}.
A time limit of 24 hours was set for running each method.

\textbf{\method outperforms all competitors by offering a superior trade-off between compressed size and fitness, as shown in Figure~\ref{fig:trade_off}}.
Particularly, on the \texttt{Stock} dataset, \method achieves a compressed size 7.38$\times$ smaller than that of the second-best method, while offering similar fitness. Moreover,
when the compressed size is almost the same, the fitness of \method is 3.33$\times$ greater than that of the competitor with the best fitness.
While NeuKron is specifically designed for sparse tensors and performs relatively better in such datasets, including the \texttt{Uber} dataset, \method still outperforms it on all datasets.
%Note that \textsc{NeuKron} timed out on the \texttt{PEMS-SF}, \texttt{Absorb}, and \texttt{NYC} datasets.
%Note that our method works well on both 3-order and 4-order tensors. 
%We repeated the experiment 5 times for each data and method.
%The average Fitness is plotted in Figure~\ref{fig:trade_off} and the error bar means the area within the standard deviation.

%\jihoon{
%For \method and the considered competitors, we compared the approximation accuracy and compressed sizes of their outputs. For \method, we measured the performance by changing the hidden dimension $h$ from AA to BB and the rank $r$ of the TT-cores for CC to DD. We further provided the detailed hyperparameter selection for \method and the competitors in Appendix ??
%}

\begin{figure}[t]
    \centering
    \vspace{-1mm}
    \includegraphics[width=0.75\linewidth]{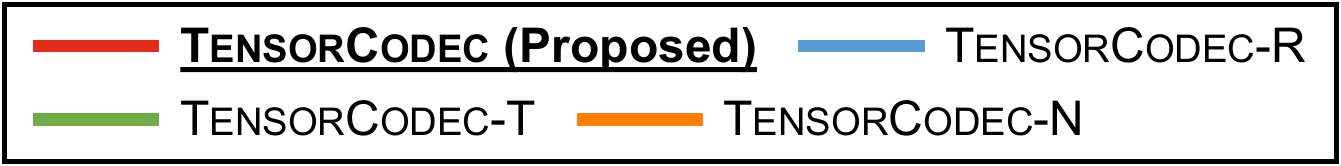}
    
    \subfigure[\texttt{Action}]{
        \includegraphics[width=0.24\linewidth]{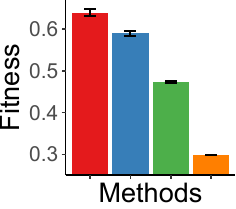} 
    }
    \hspace{-5mm}
    \subfigure[\texttt{Air Quality}]{
        \includegraphics[width=0.24\linewidth]{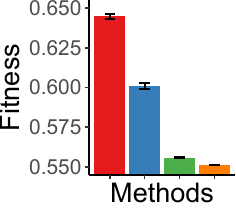} 
    }
    \hspace{-5mm}
    \subfigure[\texttt{PEMS-SF}]{
        \includegraphics[width=0.24\linewidth]{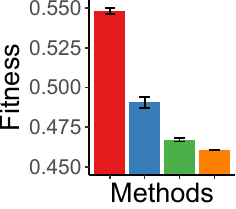} 
    }
    \hspace{-5mm}
    \subfigure[\texttt{Uber}]{
        \includegraphics[width=0.24\linewidth]{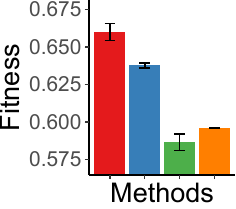} 
    } \\
    \vspace{-2mm}
    \caption{\underline{\smash{Every component of \method is effective for improved compression.}} Fitness significantly increases as more components are added. \label{fig:ablation}
    }
    %We check the performance of our methods by removing our ideas one by one.
    %The results show decreasing fitness when ideas are removed.}
\end{figure}

\subsection{Q2. Ablation Study}
\label{sec:exp:ablation}
To demonstrate the effectiveness of each component of \method, we considered the following variants of it:
\begin{enumerate}[leftmargin=*]
\item \method-R: a variant of \method that does not include the repeated {\bf r}eordering of mode indices.
\item \method-T: a variant of \method-R that does not initialize the order of mode indices based on the $2$-approximate solution of Metric {\bf T}SP.
\item \method-N: a variant of \method-T without an auto-regressive {\bf n}eural network for generating TT cores. Instead, it simply applies \ttd to the folded tensor.
\end{enumerate}

%The ideas in our paper are removed one by one from \method to \method-N.

We compared the reconstruction accuracy in terms of fitness for the four small datasets, %\texttt{Action}, \texttt{Air Quality}, \texttt{PEMS-SF}, and \texttt{Uber}, 
to evaluate the performance of the \method variants.
\method-N is optimized by TT-SVD, while the other variants are optimized by gradient descents.
%The numbers of parameters for \method, \method-R, and \method-T are exactly the same. 
We set the TT ranks of \method-N so that its number of parameters is closest to that of the other variants, which have exactly the same number of parameters.
%We repeated an experiment 5 times for each dataset and method, and reported the average Fitness and the area within the standard deviation in an error bar.
%The average Fitness is drawn in the form of a bar in Figure~\ref{fig:ablation} and the black error bar means the area within the standard deviation.

\textbf{Each component of \method contributes to improved compression.}
As shown in Figure~\ref{fig:ablation}, the fitness of \method increases as more components are added.
An exception occurs in the \texttt{Uber} dataset, where the fitness shows a slight increase from \method-T to \method-N. This is likely to be influenced by the difference in the number of parameters (i.e., \method-N uses 10,776 bytes while \method-T uses 10,104 bytes for parameters).
\vspace{-1mm}

\begin{figure}[t]
    \centering
    \vspace{-3mm}
    \includegraphics[width=0.65\linewidth]{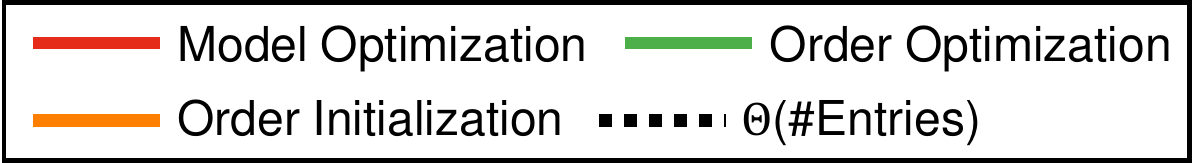}
    \subfigure{    
        \includegraphics[width=0.3\linewidth]{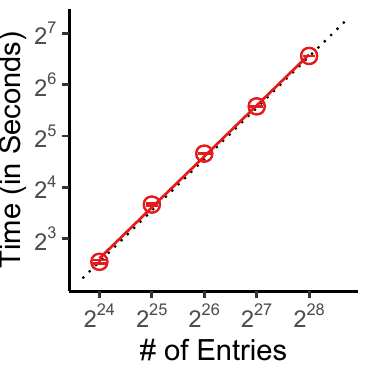}
        \hspace{-2mm}
        \includegraphics[width=0.3\linewidth]{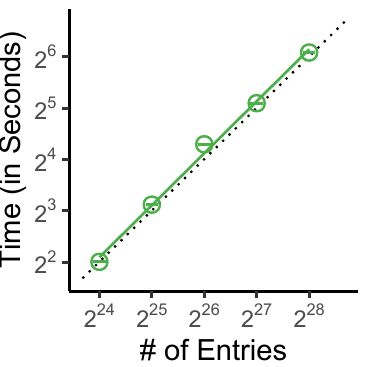}
        \hspace{-2mm}
        \includegraphics[width=0.3\linewidth]{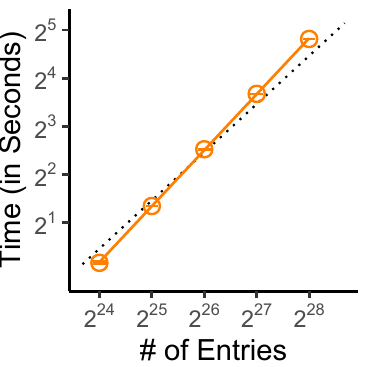}
    } \\
    \vspace{-2mm}
    %\subfigure[4-order tensor]{    
    %    \includegraphics[width=0.33\linewidth]{figures/scalability/training/order4_model.pdf}
     %   \includegraphics[width=0.33\linewidth]{figures/scalability/training/order4_reordering.pdf}
     %   \includegraphics[width=0.33\linewidth]{figures/scalability/training/order4_init.pdf}
    %}    
    \caption{
    \underline{\smash{The compression time of \method scales near linearly with}}
    \underline{\smash{the number of entries in the input $4$-order tensor.}}
    See Section \romanum{6} of \cite{appendix} for the near-linear scalability on $3$-order tensors.}
    \label{fig:scalable:train}
\end{figure}

\subsection{Q3. Scalability} \label{sec:exp:scalability}

We investigate the scalability of \method with respect to the tensor size while fixing $R$ and $h$ values to 8.

\smallsection{Compression speed}
We examined the increase in compression time for \method as the number of entries in the input tensors grows. 
Specifically, we measured the time taken for order initialization and a single iteration of the model and order optimization on five synthetic full tensors with varying sizes. 
Their sizes can be found in Table~\romanum{5} of \cite{appendix}. %Appendix 
Each entry of them is sampled uniformly between 0 and 1.
%We measured the elapsed time 5 times and averaged the results.
%The plots for the results are available in Figure~\ref{fig:scalable:train}.

\textbf{The compression time of \method, specifically, all its three steps. increases near linearly with the number of entries in the input tensor}, as illustrated in Figure~\ref{fig:scalable:train}. % Specifically, the running time of all three steps increases near linearly with the number of tensor entries. 
This is because the number of tensor entries is much larger and also increases much faster than all other terms in the time complexity in Theorem~\ref{thm:time:total} in Section~\ref{sec:method:theorem}.
%The results empirically support Theorem~\ref{thm:time:total} and the discussions in Section~\ref{sec:method:theorem}.

\smallsection{Reconstruction speed}
We also examined the scalability of reconstruction from the compressed output of \method. We generated synthetic tensors of orders 3 and 4, with mode sizes increasing from $2^6$ to $2^{18}$ by a factor of 2. Then, we measured the total elapsed time for reconstructing $2^{18}$ entries sampled uniformly at random from each tensor.

\textbf{The reconstruction time of \method is sublinear} as shown in Figure~\ref{fig:scalable:infer}.
Specifically, the time increases logarithmically with the size of the largest mode in the input tensor. This result is in line with Theorem~\ref{thm:time:entry} presented in Section~\ref{sec:method:theorem}.

% where the order is $\log{N_{\text{max}}}$, which is consistent with Theorem~\ref{thm:time:entry} and indicates that \method  decompresses each entry from a compressed output in logarithmic time with $N_{\text{max}}$ of its original input tensor.

\begin{comment}
\subsection{Q4. Compression Time}
\label{sec:exp:time}

We compared the total compression time of \method with that of the competitors on each dataset, and the results are shown in Figure~\ref{fig:time}.
%\red{The experimental setup is the same as in Section~\ref{sec:exp:tradeoff}.}
We use the experimental setups in Section~\ref{sec:exp:tradeoff}.
For each dataset, we chose the setting of $R$ and $h$ for \method of which the number of parameters is the smallest. 
The hyperparameters of the other methods are set to the settings that show the most similar fitness to our method.

Although our method takes less time than the other neural-network-based method, NeuKron, which sometimes takes more than 24 hours, \method is still significantly slower than the other methods. We leave improving the compression speed of \method as future work.
\end{comment}

\begin{figure}[t]
    \vspace{-5mm}
    \centering
    \subfigure[3-order tensor]{
\includegraphics[width=0.35\linewidth]{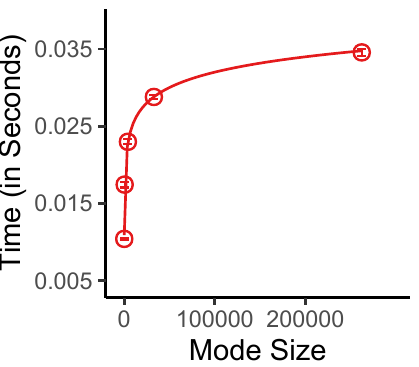}
    }
    \subfigure[4-order tensor]{
\includegraphics[width=0.35\linewidth]{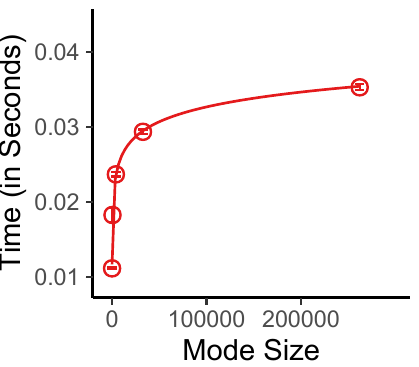}
    } \\
        \vspace{-2mm}
    \caption{ \label{fig:scalable:infer} 
         \underline{\smash{The reconstruction time of \method is sub-linear.}} 
         When reconstructing the input tensor entries from the outputs of \method, the required time is logarithmic with respect to the largest mode size.
    }
\vspace{-1mm}    
\end{figure}

\begin{comment}
\begin{figure}[t]
    \vspace{-3mm}
    \centering
    \includegraphics[width=\linewidth]{}
    \vspace{-2mm}
    \includegraphics[width=\linewidth]{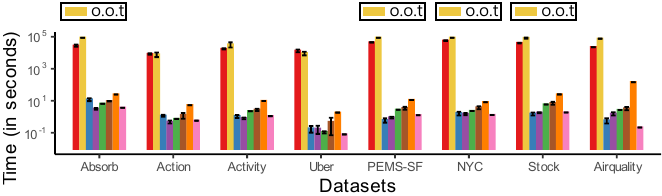}
\caption{The total compression time of \method and the competitors. 
NeuKron runs out of time on some datasets,  taking more than 24 hours.
% We set the time for NeuKron to 24 hours in \texttt{Absorb}, \texttt{PEMS-SF}, \texttt{NYC}, and \texttt{Kstock} datasets since the training phase of NeuKron was not finished in 24 hours.
}
\label{fig:time}
%\vspace{-2mm}
\end{figure}
\end{comment}

\subsection{Q4. Further Investigation}
\label{sec:exp:power}

\smallsection{Order of mode indices}
We compare the mode index orders obtained by \method and NeuKron in the \texttt{NYC} dataset.\footnote{We set $R$ to 10 and $h$ to 9 for \method. For NeuKron, we set $h$ to $14$ so that it uses slightly more space than \method. However, NeuKron's compression did not terminate within 24 hours, so we used the result obtained at the time limit.}
The indices of the first two modes of the dataset indicate regions of New York City (see Section~\romanum{3} of \cite{appendix}).
%(refer to Section~\romanum{3} of \cite{appendix}). 

As visualized in Figure~\ref{fig:case_study}.
\textbf{the reordering process of \method assigns nearby locations to similar mode indices.} This is particularly evident in Manhattan and Staten Island, where the colors of nearby locations are similar to each other. However, the mode indices obtained by NeuKron do not exhibit any clear patterns.
This result demonstrates the effectiveness of the reordering technique used in \method.

\begin{figure}[t]
    \vspace{-5mm}
   \centering
    \subfigure[\label{fig:case_study:tencodec} Reordering by \method]{
        \includegraphics[width=0.47\linewidth]{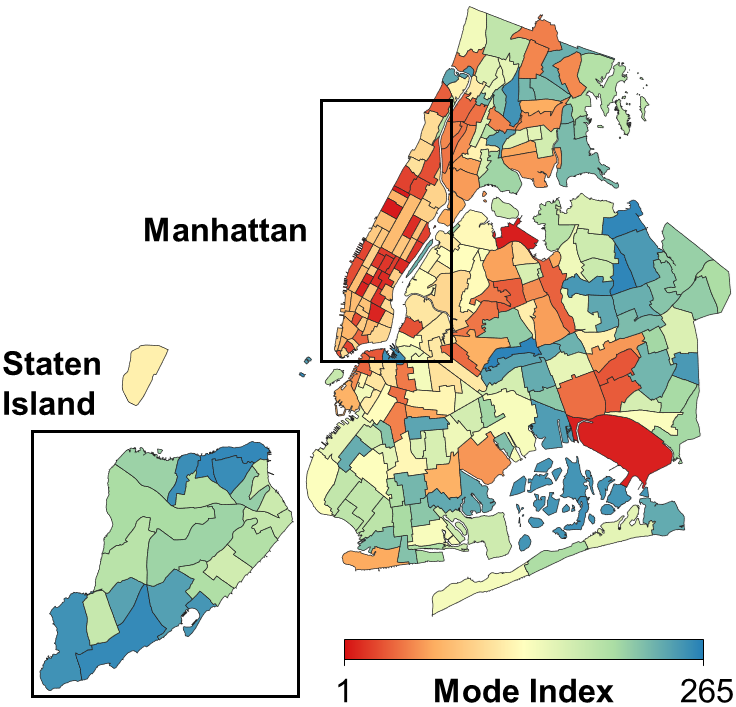}
    }    
    \hspace{-2mm}
    \subfigure[\label{fig:case_study:neukron} Reordering by \textsc{NeuKron}]{
        \includegraphics[width=0.47\linewidth]{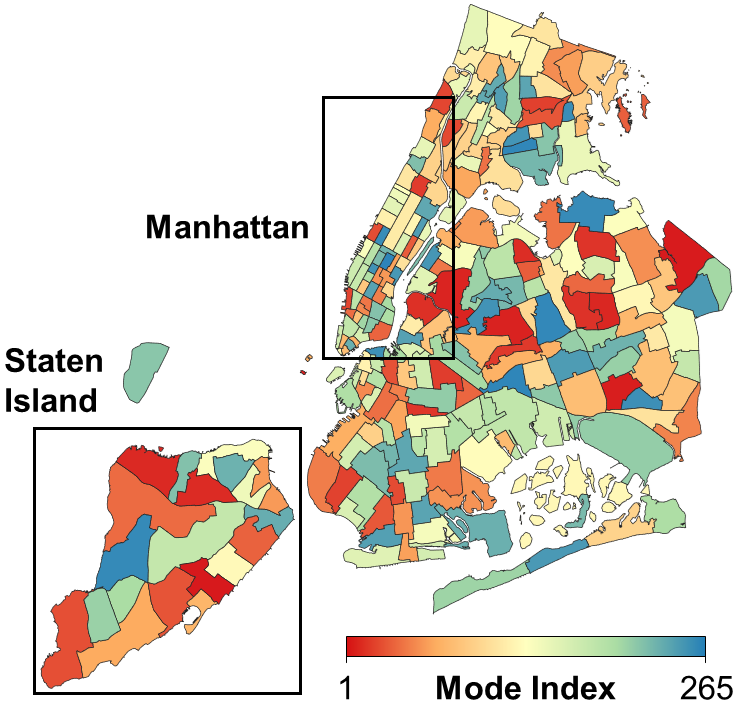}
    } \\
    \vspace{-2mm}
    \caption{
    %\underline{\smash{\method successfully finds the semantically close indices and put them nearly.}}
    In the \texttt{NYC} dataset, where mode indices correspond to locations within New York City, 
    \method  assigns nearby locations to similar mode indices, as shown in (a), resulting in comparable colors for nearby areas, particularly in Manhattan and Staten Island. However, the order obtained by NeuKron in (b) does not exhibit any clear patterns.
    %(a) the order of \method tends to paint spatially adjacent sections with similar colors while the colors of distant sections are far apart on the color spectrum.
    %In contrast, the coloring patterns of (b) the initial order of the datasets and (c) the order of NeuKron are indistinguishable. \red{[To be reviewed by KJ]}
    %This pattern is not found in the case of the initial order~(Figure~\ref{fig:case_study:initial}) and the order earned by NeuKron~(Figure~\ref{fig:case_study:neukron}).
    } 
    \label{fig:case_study}
\end{figure}

\smallsection{Expressiveness}
We evaluated the expressiveness of \method by analyzing the number of parameters required for typical tensor decomposition methods to accurately approximate tensors generated by \method. To conduct the experiment, we created two tensors, one with size $256\times256\times256$ and another with size $128\times128\times128\times128$, by unfolding higher-order tensors generated by \method.\footnote{The TT rank $R$ and hidden dimension $h$ of \nttd were both set to $5$, and the parameters of \nttd were randomly initialized.}

The results, depicted in Figure~\ref{fig:high_rank}, demonstrate that \textbf{\method is capable of expressing high-rank tensors that require a large number of parameters for typical tensor decomposition methods to approximate accurately.}
\vspace{-2mm}
\begin{figure}[t]
    \vspace{-6mm}
    \hspace{3mm}
    \begin{minipage}[c][3cm]{0.18\linewidth}
        \includegraphics[width=\linewidth]{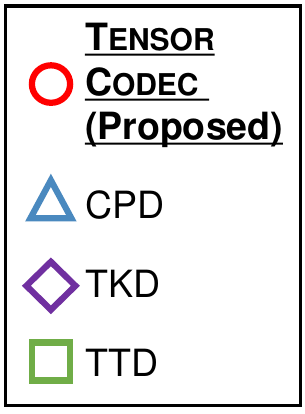}
    \end{minipage}
    \begin{minipage}[c][3cm][b]{0.8\linewidth} 
        \subfigure[3-order tensor]{
            \includegraphics[width=0.4\linewidth]{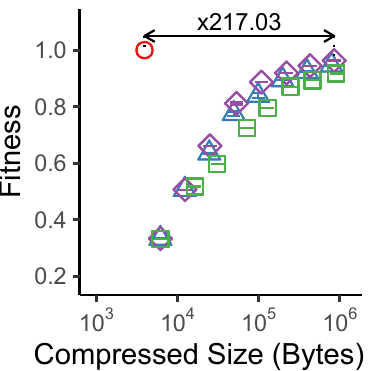}
        }
        \subfigure[4-order tensor]{
            \includegraphics[width=0.4\linewidth]{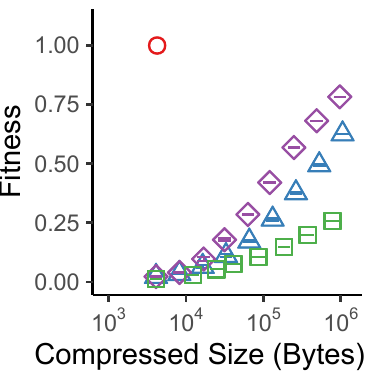}
        }
    \end{minipage}  
    \\
    \vspace{-3mm}
    \caption{\label{fig:high_rank}
    % \method shows enhanced expressiveness as it can model high-rank tensors, while the low-rank tensor decomposition methods suffer from accurately compressing them. 
    % %low-rank tensor decomposition methods
    Expressiveness of \method. It is capable of generating high-rank tensors, which require a large number of parameters to be precisely approximated by traditional tensor decomposition methods. % specifically CP, Tucker, and TT decompositions.    
    }
\end{figure}

\subsection{Q5. Compression Speed}
We compare the total compression time of all considered methods.
%\red{The experimental setup is the same as in Section~\ref{sec:exp:tradeoff}.}
We use the experimental setups in Section~\ref{sec:exp:tradeoff}.
For each dataset, we chose the setting of $R$ and $h$ with the smallest number of parameters. For the competitors, we used the settings with the fitness most similar to our method.
%As shown in Figure~\ref{fig:time}, while \method takes less time than NeuKron, which sometimes takes more than 24 hours, it is still significantly slower than the non-deep-learning-based methods. 
As shown in Figure~\ref{fig:time},
while \method takes less time than NeuKron, which sometimes takes more than 24 hours, it is still slower than the non-deep-learning-based methods. 
%We leave improving the compression speed of \method as future work.

In Sections~\romanum{7} of \cite{appendix}, we provide additional experimental results, including hyperparameter sensitivity analysis.

\section{Conclusions} \label{sec:conclusion}
\begin{figure}[t]
   % \vspace{-3mm}
    \centering
    \includegraphics[width=0.7\linewidth]{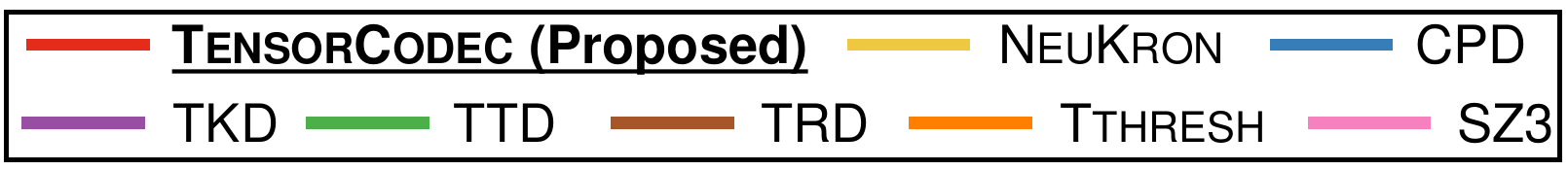}
    \vspace{-2mm}
    \includegraphics[width=0.9\linewidth]{figures/time/time.pdf}
\caption{The total compression time of \method and the competitors. 
NeuKron runs out of time on some datasets,  taking more than 24 hours.
%\method is slower than non-deep-learning based methods, and we leave improving the compression speed of \method as future work.
% We set the time for NeuKron to 24 hours in \texttt{Absorb}, \texttt{PEMS-SF}, \texttt{NYC}, and \texttt{Kstock} datasets since the training phase of NeuKron was not finished in 24 hours.
}
\label{fig:time}
%\vspace{-2mm}
\end{figure}

In this work, we study the problem of lossy compression of tensors and devise \method that compactly compresses tensors without strong data assumptions.
Our main ideas are (a) Neural Tensor-Train Decomposition, (b) high-order tensor folding, and (c) mode-index reordering.
Using $8$ real-world tensors, we show the following advantages of \method:
\begin{itemize}[leftmargin=*]
    \item \textbf{Concise:} 
    It offers up to $7.38 \times$ more compact compression than the best competitor with similar reconstruction error. 
    \item \textbf{Accurate:} 
    When compression ratios are comparable, \method achieves up to $3.33 \times$ lower reconstruction error than the competitor with the smallest error.
    \item \textbf{Scalable:} 
    Empricially, its compression time is linear in the number of entries.
    %Empirically, it compresses the input tensor in linear time with respect to the number of entries. 
    It reconstructs each entry in logarithmic time with respect to the largest mode size. 
\end{itemize}
We plan to improve the speed of \method and theoretically analyze its approximation accuracy in future work.

%\smallsection{Reproducibility} The code and datasets are available at \cite{appendix}.

%%\method provides more compact and accurate compression of tensors compared to its competitors (Figure~\ref{fig:trade_off}).
%We also theoretically and empirically verify our method's scalability in training and inference in Section~\ref{sec:method:theorem} and Section~\ref{sec:exp:scalability}.
%The contributions of our method can be summarized as follows:

{\small
\section*{Acknowledgements}
This work was funded by the Korea Meteorological Administration Research and Development Program ``Developing Intelligent Assistant Technology and Its Application for Weather Forecasting Process'' (KMA2021-00123).
This work was supported by Institute of Information \& Communications Technology Planning \& Evaluation (IITP) grant funded by the Korea government (MSIT)  (No. 2022-0-00157, Robust, Fair, Extensible Data-Centric Continual Learning) (No. 2019-0-00075, Artificial Intelligence Graduate School Program (KAIST)) (No.2021-0-02068, Artificial Intelligence Innovation Hub)
This work was supported by the National Research Foundation of Korea (NRF) grant funded by the Korea government (MSIT) (No. NRF-2020R1C1C1008296) (No. NRF-2021R1C1C1008526).
}
%\section{Appendix: Traveling Salesman Problem and Order Initialization} %\label{sec:appendix}
%\input{7appendix}

\bibliographystyle{IEEEtran}
\bibliography{bib}
\end{document}